\documentclass[twoside,11pt]{article}

\usepackage{journal,times}
\usepackage{float}

\usepackage{latexsym,wrapfig}
\usepackage{graphicx}

\usepackage{float}
\usepackage{latexsym}
\usepackage{amsmath} 
\usepackage{amstext}
\usepackage{amsfonts}
\usepackage{amsopn}
\usepackage{ifthen}
\usepackage[mathscr]{eucal}

\usepackage[dvips,backref,pageanchor=true,plainpages=false,
pdfpagelabels,bookmarks,bookmarksnumbered,breaklinks,
pdfborder={0 0 0},  
]{hyperref}

\newcommand{\ignore}[1]{}

\floatstyle{ruled}
\newfloat{algorithm}{h}{loa}
\floatname{algorithm}{Algorithm}

\newfloat{phase}{h}{loa}
\floatname{phase}{Phase}

\newfloat{subroutine}{h}{loa}
\floatname{subroutine}{Subroutine}

\newcommand{\nt}{\newtheorem}

\nt{note} {Note} \nt{fact} {Fact}

\newcommand{\err}{{\rm err}}

\newcommand{\plur}{\mathrm{plur}}

\newcommand{\true}{\mathrm{true}}

\newcommand{\ps}{\mathrm{ps}}
\newcommand{\s}{\mathrm{s}}

\newcommand{\D}{\mathcal D} 
\newcommand{\Dset}{\mathbb{D}} 
\newcommand{\X}{\mathcal X} 
\newcommand{\DIS}{{\rm DIS}}
\newcommand{\Y}{\mathcal Y} 
\newcommand{\alg}{\mathcal A} 
\renewcommand{\H}{\mathcal H} 
\newcommand{\U}{\mathcal U} 
\newcommand{\C}{\mathbb C} 
\renewcommand{\P}{\mathbb P} 
\newcommand{\nats}{\mathbb{N}} 
\newcommand{\Epsilon}{\mathscr{E}}
\newcommand{\E}{\mathbb E}

\newcommand{\tx}{\tilde{x}}
\newcommand{\ty}{\tilde{y}}
\newcommand{\A}{\cal A}
\newcommand{\state}{{\cal S}}
\newcommand{\RCN}{{\rm RCN}}
\newcommand{\Realizable}{{\mathscr R}{\rm ealizable}}
\newcommand{\BoundedNoise}{{\rm BN}}
\newcommand{\Agnostic}{{\mathscr A}{\rm gnostic}}

\newcommand{\AL}{\mathrm{AL}}
\newcommand{\CCQM}{\mathrm{CCQ}}
\newcommand{\SC}{\mathrm{QC}}

\newcommand{\argmax}{\mathop{\rm argmax}}

\jmlrheading{}{}{}{}{}{Maria-Florina Balcan and Steve Hanneke}
\ShortHeadings{Robust Interactive Learning}{Balcan and Hanneke}
\firstpageno{1}

\begin{document}

\title{Robust Interactive Learning}

\author{\name Maria Florina Balcan \email ninamf@cc.gatech.edu \\
\addr 
School of Computer Science \\
Georgia Institute of Technology 
\\ \name Steve Hanneke \email shanneke@stat.cmu.edu \\
\addr 
Department of Statistics \\
Carnegie Mellon University 
}
\editor{someone}

\maketitle

\begin{abstract}
In this paper we propose and study a generalization of the standard
  active-learning model where a more general type of query, class conditional query, is
  allowed. Such queries have been quite useful in applications, 
but have been lacking theoretical understanding.
  In this work, we  characterize the power of such queries under two well-known noise models. We give nearly tight upper and lower bounds on the number of queries needed to learn both for the general agnostic setting and for
the bounded noise model.
  We further show that our methods can be made adaptive to the (unknown) noise rate,
with only negligible loss in query complexity.
\end{abstract}

\section{Introduction}
\label{sec:introgeneralinteractive}
The ever-expanding range of application areas for machine learning,
together with huge increases in the volume of raw data available, has
encouraged researchers to look beyond the classic paradigm of passive
learning from labeled data only.  Perhaps the most extensively used
and studied technique in this context is Active Learning, where the
algorithm is presented with a large pool of unlabeled examples (such
as all images available on the web) and can interactively ask for the
labels of examples of its own choosing from the pool.  The aim is to use
this interaction to drastically reduce the number of labels needed
(which are often the most expensive part of the data collection
process) in order to reach a low-error hypothesis.

Over the past ten years there has been a great deal of progress on
understanding active learning and its underlying
principles~\citep*{BBL,BBZ07,BDL09,CN07,dhsm,Hanneke07,Nina08,hanneke:thesis,Kol10,nips09,nips10}.
However, while useful in many applications~\citep*{McNi98,TonKol01},
requesting the labels of select examples
is only one very specific type of interaction
between the learning algorithm and the labeler.   When
analyzing many real world situations, it is desirable to consider
learning algorithms that make use of other types of queries as
well. For example, suppose we are actively learning a multiclass image
classifier from examples. If at some point, the algorithm needs an
image from one of the classes, say an example of ``house'', then an
algorithm that can only make individual label requests may need to ask
the expert to label a large number of unlabeled examples before it
finally finds an example of a house for the expert to label as
such. This problem could be averted by simply allowing the algorithm
to display a list of around a hundred thumbnail images on the screen,
and ask the expert to point to an image of a house if there is
one. The expert can visually scan through those images looking for a
house much more quickly than she can label every one of them. So in
this case, we get a significant increase in power by being able to ask
a particular type of query. In fact, queries of this type have been quite useful in several applications~\citep*{kollercquery2005,doyle:09},
but unfortunately, they have been lacking a principled theoretical understanding.

  In this work we  expand the study of active
learning by considering a model that allows us to analyze queries
motivated by such applications.  Specifically, the query protocol we analyze, namely class-conditional queries,
is based on the ability to ask for an example of a given label within a given set of unlabeled examples.
That is, the algorithm is provided with a large pool of unlabeled examples,
and may interact with an oracle as follows.
In each query, the algorithm proposes a label and a subset of the unlabeled examples,
and asks the oracle to point to one of these examples whose true label agrees with the specified label, if any exist.
This is a strict generalization of the traditional model of active learning by label requests.

It is well known that if the target function resides in a known concept class
and there is no classification noise (the so-called \emph{realizable case}),
then a simple approach based on the Halving algorithm~\citep*{litt} can learn a function
$\epsilon$-close to the target function using a number of queries dramatically
smaller than the number of random labeled examples required for PAC learning \citep*{hanneke:thesis}.

Encouraged by such strong results for the realizable case, we may
wonder whether equally strong reductions in query complexity are feasible in the
presence of classification noise.  In the present work, we find that
in the general agnostic case, this is not true when the noise rate is large, though a different type of reduction  is
consistently possible: namely, reduction by a factor related
to the overall noisiness of the data.  While this reduction
is much more modest than those achievable in the realizable case,
the fact that it is consistently available is interesting, in that
it contrasts with active learning, where the known improvements over passive
learning vary depending on the structure of the concept space \citep*{Hanneke07,hanneke:07b,dhsm}.
We also prove a sometimes stronger result in the special case of
bounded noise: namely, that compared to \emph{active learning},
the query complexity with class conditional queries is reduced by
a factor related to the noise bound.

\paragraph{Our Results}  
We provide the first general results concerning the query complexity of class-conditional queries in the presence of noise in a multiclass setting.
In particular:
\begin{list}{\labelitemi}{\leftmargin=1.5em}
\item[1.] In the purely agnostic case with noise rate $\eta$, we show that any interactive learning algorithm in this model seeking a classifier of error at most
$\eta+\epsilon$ must make $\Omega(d\eta^2 / \epsilon^2)$ queries, where $d$ is the Natarajan dimension;
we also provide a nearly matching upper bound of $\tilde{O}(d \eta^2 / \epsilon^2)$, for a constant number of classes.
This is smaller by a factor of $\eta$ compared to the sample complexity of passive learning,
and represents a reduction over the known results for the query complexity of active learning in many cases.
\item[2.] In the bounded noise model,
we provide nearly tight upper and lower bounds on the query complexity of the general query model as a function of the query complexity of active learning.
In particular, we find that the query complexity of the general query model is essentially reduced by a factor of the noise bound, compared to active learning.
\item[3.] We further show that our methods can be made adaptive to the (unknown) noise rate $\eta$,
with only negligible loss in query complexity.
\end{list}

Overall, we find that the reductions in query complexity for this model, compared to the traditional active learning model, are
largely concerned with a factor relating to the noise rate of the learning problem, so that the closer to the realizable case we are, the
greater the potential gains in query complexity.  However, for larger noise rates, the benefits are more modest,
a fact that sharply contrasts with the enormous benefits of using these types of queries in the realizable case;
this is true even for very benign types of noise, such as bounded noise, a fact that may seem surprising, especially since
the query complexity of the traditional active learning model is essentially unchanged (up to constant and log factors) by the presence of bounded noise, compared to the realizable case \citep*{kaariainen:06}.
We hope our analysis will help inform the use of these queries in practical learning problems, as well as provide a point of reference for future exploration of the general
topic of interactive machine learning.

\section{Formal Setting}
\label{sec:model}

We consider an interactive learning setting defined as follows.
There is an \emph{instance space} $\mathcal{X}$, a \emph{label space} $\Y$, and
some fixed \emph{target distribution} $\D_{XY}$ over $\mathcal{X} \times \Y$, with marginal $\D_{X}$ over $\X$.
Focusing on multiclass classification, we assume that $\Y = \{1,2,\ldots,k\}$, for some $k \in \mathbb{N}$.
In the learning problem, there is an i.i.d. sequence of random variables $(x_1,y_1), (x_2,y_2), (x_3,y_3), \ldots $, each with distribution $\D_{XY}$.
The learning algorithm is permitted direct access to the sequence of $x_i$ values (unlabeled data points).
However, information about the $y_i$ values is obtainable only via interaction with an oracle, defined as follows.

At any time, the learning algorithm may propose a label $\ell \in \Y$
and a finite subsequence of unlabeled examples $S=\{x_{i_1}, ..., x_{i_m}\}$ (for any $m \in \nats$);
if $y_{i_j} \neq \ell$ for all $j \leq m$, the oracle returns ``none.''  Otherwise,
the oracle selects an arbitrary $x_{i_j} \in S$ for which $y_{i_j}=\ell$ and returns the pair $(x_{i_j},y_{i_j})$.
In the following we call this model the $\CCQM$ (class-conditional queries) interactive learning model.
Technically, we implicitly suppose the set $S$ also specifies the unique indices of the examples it contains,
so that the oracle knows which $y_i$ corresponds to which $x_{i_j}$ in the sample $S$; however,
we make this detail implicit below to simplify the presentation.

In the analysis below, we fix a set of
classifiers $h : \X \to \Y$ called the \emph{hypothesis class}, denoted $\C$.
We will denote by $d$ the Natarajan dimension of $\C$ \citep*{natarajan:89,haussler:95,ben-david:95},
defined as the largest $m \in \nats$ such that $\exists (a_1,b_1,c_1),\ldots,(a_m,b_m,c_m) \in \X \times \Y \times \Y$
such that $\{b_1,c_1\} \times \cdots \times \{b_m,c_m\} \subseteq \{(h(a_1),\ldots,h(a_m)) : h \in \C\}$.
The Natarajan dimension has been calculated for a variety of hypothesis classes, and is known 
to be related to several other commonly used dimensions, including the pseudo-dimension 
and graph dimension \citep*{haussler:95,ben-david:95}.  For instance, for neural networks of $n$ nodes
with weights given by $b$-bit integers, the Natarajan dimension is at most $bn(n-1)$ \citep*{natarajan:89}.

For any $h :\X \to \Y$ and distribution $P$ over $\X \times \Y$,
define the \emph{error rate} of $h$ as $\err_{P}(h) = \P_{(X,Y)\sim P}\{h(X) \neq Y\}$; when $P = \D_{XY}$,
we abbreviate this as $\err(h)$.
For any finite sequence of labeled examples $L = \{(x_{i_1},y_{i_1}), \ldots, (x_{i_m},y_{i_m})\}$,
we define the empirical error rate $\err_{L}(h) = |L|^{-1} \sum_{(x,y) \in L} \mathbb{I}[h(x) \neq y]$.
In some contexts, we also refer to the empirical error rate on a finite sequence of \emph{unlabeled}
examples $U = \{x_{i_1}, \ldots, x_{i_m}\}$, in which case we simply define
$\err_{U}(h) = |U|^{-1} \sum_{x_{i_j} \in U} \mathbb{I}[h(x_{i_j}) \neq y_{i_j}]$, where the $y_{i_j}$ values
are the actual labels of these examples.

Let $h^*$ be the classifier in $\C$ of smallest $\err(h^*)$ (for simplicity, we suppose the minimum is always realized),
and let $\eta = \err(h^*)$, called the \emph{noise rate}.
The objective of the learning algorithm is to identify some $h$ with $\err(h)$
close to $\eta$ using only a small number of queries.
In this context, a \emph{learning algorithm} is simply any algorithm that makes some number of
queries and then halts and returns a classifier.  We are particularly interested in the following quantity.

\begin{definition}
\label{defn:query-complexity}
For any $\epsilon,\delta \in (0,1)$, any hypothesis class $\C$, and any family of distributions $\Dset$ on
$\X \times \Y$, define the quantity $\SC_{\CCQM}(\epsilon, \delta, \C,\Dset)$
as the minimum $q \in \nats$ such that there exists a learning algorithm $\alg$, which
for any target distribution $\D_{XY} \in \Dset$, with probability at least $1-\delta$,
makes at most $q$ queries and then returns a classifier $\hat{h}$ with $\err(\hat{h}) \leq \eta + \epsilon$.
We generally refer to the function $\SC_{\CCQM}(\cdot,\cdot,\C,\Dset)$
as the \emph{query complexity} of learning $\C$ under $\Dset$.
\end{definition}

The query complexity, as defined above, represents a kind of minimax statstical analysis,
where we fix a family of possible target distributions $\Dset$, and calculate, for the best
possible learning algorithm, how many queries it makes under its worst possible target
distribution $\D_{XY}$ in $\Dset$. Specific families of target distributions we will be interested
in include the random classification noise model, the bounded noise model, and the agnostic model which we define formally in the
corresponding sections.
In some contexts, we may also discuss the query complexity achieved by a particular algorithm,
in which case it is merely the same definition as above except replacing $\alg$ with the particular
algorithm in question.

\section{The General Agnostic Case}
\label{general-queries}

We start by considering the most general, \emph{agnostic} setting,
where we consider arbitrary noise distributions subject to a constraint on the noise rate.
This is particularly relevant to many practical scenarios, where we often do not know what
type of noise we are faced with, potentially including stochastic labels or model misspecification,
and we would therefore like to refrain from making any specific assumptions about the
nature of the noise.
Formally, the family of distributions we consider is
$\Agnostic(\C,\alpha) = \{ \D_{XY} : \inf_{h \in \C} \err(h) \leq \alpha\}$, $\alpha \in [0,1/2)$.
In this section we prove nearly tight upper and lower bounds on the query complexity of our model.
Specifically, supposing $k$ is constant, we have the following theorem.  

\begin{theorem}
\label{thm:main-agnostic}
For any hypothesis class $\C$ of Natarajan dimension $d$, for any $\eta \in [0,1/32)$,
\[\SC_{\CCQM}(\epsilon,\delta,\C,\Agnostic(\C,\eta)) = \tilde{\Theta}\left( d \frac{\eta^2}{\epsilon^2}\right).\]
\end{theorem}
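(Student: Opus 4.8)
The plan is to prove the matching upper and lower bounds separately, since they rest on quite different ideas. For the lower bound $\SC_{\CCQM} = \tilde{\Omega}(d\eta^2/\epsilon^2)$, I would reduce from a hypothesis-testing problem. Using the Natarajan dimension, fix $d$ points $a_1,\dots,a_d$ shattered in the Natarajan sense with label pairs $\{b_i,c_i\}$. On each such point, place nearly all the mass of $\D_X$ spread over a ``fresh'' block of unlabeled copies, and let the conditional label distribution be a $(\tfrac12\pm\gamma)$-biased coin between $b_i$ and $c_i$, with $\gamma \asymp \epsilon/\eta$ (chosen so that the overall noise rate is $\approx\eta$ and so that getting $\epsilon$-close to optimal forces the learner to identify the majority label on a constant fraction of the $d$ coordinates). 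The key observation is that a class-conditional query behaves, information-theoretically, no better than seeing the labels of the queried points: the oracle's answer ``which example has label $\ell$'' can be simulated from the (multi-)set of true labels in $S$, so one query on a set of size $m$ reveals at most $O(\log m)$ bits, and in the relevant regime where $S$ touches a single coordinate's block, it is essentially one biased-coin flip. A standard Fano / KL-divergence argument (each coordinate requires $\Omega(1/\gamma^2) = \Omega(\eta^2/\epsilon^2)$ effective flips to identify its bias with constant confidence, and the coordinates are independent) then yields the $\tilde{\Omega}(d\eta^2/\epsilon^2)$ bound. The one subtlety is handling queries whose set $S$ spans many coordinates at once; I would argue such a query is no more informative per example than $|S|$ separate single-coordinate queries, so it cannot beat the budget — this reduction is the part I expect to require the most care.

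For the upper bound $\tilde{O}(d\eta^2/\epsilon^2)$, the strategy is to use class-conditional queries to \emph{cheaply identify the labels of a carefully chosen subsample}, then do empirical risk minimization over $\C$. A passive learner would need $\tilde{\Theta}(d/\epsilon^2)$ \emph{labeled} examples (by the Natarajan-dimension uniform-convergence bound, e.g.\ \citet{haussler:95,ben-david:95}) to find an $h$ with $\err(h)\le\eta+\epsilon$; the goal is to obtain the labels of such a sample using only $\tilde{O}(d\eta^2/\epsilon^2)$ queries — a factor-$\eta$ savings. The mechanism: draw $n = \tilde{\Theta}(d/\epsilon^2)$ unlabeled points; to label them, note that for each class $\ell$, the points with true label $\ell$ form (typically) a small fraction unless $\ell$ is the plurality; more usefully, run a repeated ``sieve'' — query $(\ell, S)$ on the current set of still-unlabeled points, and each returned example removes one point and costs one query, but crucially we only need to explicitly query for the \emph{minority} labels, since once all minority-labeled points in a block are extracted, the remainder are known to carry the plurality label ``for free.'' Because the total minority mass is $O(\eta)$ by definition of the noise rate, only $O(\eta n) = \tilde{O}(d\eta/\epsilon^2)$ points ever need an explicit query \emph{on that pass}; iterating this over the $k = O(1)$ classes and being careful about the geometry (the plurality label can vary across regions of $\X$) gives an extra $\mathrm{poly}(k,\log)$ overhead. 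Wait — that gives $\tilde O(d\eta/\epsilon^2)$, which is better than claimed; the genuine bottleneck is that one cannot localize ``blocks'' with a common plurality for free, and accounting honestly for the queries needed to certify which label is plural in each region (a kind of active-learning subroutine in its own right) is what pushes the count up to $\tilde{O}(d\eta^2/\epsilon^2)$. Making this localization argument rigorous — i.e.\ bounding the queries spent discovering the plurality structure, via a Halving-style \citep{litt} or disagreement-based scheme over $\C$ — is the main obstacle on the upper-bound side.

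Putting the pieces together: the lower bound comes from the Natarajan-shattered packing plus an information-per-query bound and Fano; the upper bound comes from an adaptive query scheme that pays $\tilde O(1)$ per minority-labeled example plus an overhead for certifying plurality labels, followed by ERM over $\C$ with the resulting labeled sample of size $\tilde\Theta(d/\epsilon^2)$, invoking uniform convergence in terms of the Natarajan dimension to guarantee $\err(\hat h) \le \eta + \epsilon$ with probability $\ge 1-\delta$. Throughout, the restriction $\eta < 1/32$ is used to keep the relevant biases $\gamma \asymp \epsilon/\eta$ bounded away from $1/2$ in the lower bound and to ensure the minority mass is a small constant fraction in the upper bound, and $k = O(1)$ is what lets all the $\mathrm{poly}(k)$ factors be absorbed into the $\tilde\Theta(\cdot)$.
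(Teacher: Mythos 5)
Your high-level picture is right (a Natarajan-shattered packing with biased coins for the lower bound; ``pay only for minority labels'' plus ERM for the upper bound), but both halves of the sketch have a concrete gap, and neither gap is where you expect it to be.

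On the lower bound, the paper does not redo Fano from scratch; it gives a black-box reduction \emph{from} the CCQM model \emph{to} active learning on the known hard distribution of \citet*{BDL09}, and then simply invokes the existing $\Omega(d\eta^2/\epsilon^2)$ active-learning lower bound. The crux is exactly the ``multi-coordinate query'' issue you flag, but your proposed resolution points the wrong way: arguing that a query on $S$ is ``no more informative than $|S|$ separate label requests'' would \emph{concede} that one CCQM query may be worth up to $|S|$ label requests, which destroys any lower bound. What you actually need (and what the paper proves) is that on this distribution one CCQM query can be \emph{simulated} by $O(1)$ label requests: restrict to the relevant points, sample them one at a time and request labels until the desired label appears; each has probability at least $1/2-\gamma$ of matching, so the count is stochastically dominated by a Geometric$(1/2-\gamma)$ random variable. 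A concentration bound for sums of geometrics (Lemma~\ref{coins}) converts the per-query expectation into a high-probability bound on the total, giving $\SC_{\AL} \leq O(\SC_{\CCQM} + \log(1/\delta))$ on this family, which is the inequality you need.

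On the upper bound, the arithmetic has gone astray. You draw $n = \tilde\Theta(d/\epsilon^2)$ points and estimate the sieve cost as $\tilde O(\eta n) = \tilde O(d\eta/\epsilon^2)$, then call this ``better than claimed'' and go hunting for a hidden cost that ``pushes the count up to $\tilde O(d\eta^2/\epsilon^2)$.'' But $\eta^2 < \eta$ for $\eta \in (0,1)$, so $d\eta^2/\epsilon^2 < d\eta/\epsilon^2$: your estimate is \emph{worse} than the theorem, and no localization overhead can push it \emph{down}. The actual source of the second factor of $\eta$ is the sample size, not a hidden cost. Because the best-in-class risk is at most $\eta$, the variance-aware (multiplicative Chernoff / Bernstein) uniform-convergence bound shows a sample of size $u = \tilde O\bigl(d(\eta+\epsilon)/\epsilon^2\bigr)$ already suffices for ERM to land within $\epsilon$ of $\eta$; this is what the paper uses (Theorem~\ref{main-upper}). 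Running the sieve (Phase~\ref{refining-algorithm}, ``Find-Mistake until none remain'') against a classifier of empirical error $O(\eta+\epsilon)$ on that sample then costs $O\bigl(k(\eta+\epsilon)u\bigr) = \tilde O(kd\eta^2/\epsilon^2)$ queries, matching the claim. The ``certify which label is plural'' step you anticipate does exist — it is the generalized halving phase (Phase~\ref{generalized-halving-algorithm}), needed to produce an $h$ with $\err_U(h) = O(\eta+\epsilon)$ before the sieve is affordable — but it contributes only $\tilde O(kd\log|V|)$ queries, a lower-order additive term, not the missing $\eta$ factor.
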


The first interesting thing is that our bound differs from the sample complexity of passive learning only in a factor of $\eta$.
This contrasts with the realizable case, where it is possible to learn with a query complexity that is exponential smaller than the query complexity of passive learning. On the other hand, is also interesting that this factor of $\eta$ is consistently available regardless of the structure of the concept space. This contrasts with active learning where the  extra factor of $\eta$ is only available in certain special cases~\citep*{Hanneke07}.

\subsection{Proof of the Lower Bound}
\label{lowrboundgeneralnoise}

We first prove the lower bound.  We specifically prove that
for $0 < 2\epsilon \leq \eta < 1/4$,
\[\SC_{\CCQM}(\epsilon,1/4,\C,\Agnostic(\C,\eta)) = \Omega\left(d \frac{\eta^2}{\epsilon^2}\right).\]
Monotonicity in $\delta$ extends this to any $\delta \in (0,1/4]$.

\begin{proof}
The key idea of the proof is to provide a reduction from the (binary) active learning model (label request queries)
to our multiclass interactive learning model (general class-conditional queries) for the hard case known previously
in the literature for the active learning model~\citep*{BDL09}.

In particular, consider a set of $d$ points $x_0$,  $x_1$, $x_2$,..., $x_{d-1}$ shattered
by $\C$, and let $(y_0,z_0),$ $\ldots,$ $(y_{d-1},z_{d-1})$ be the label pairs that witness the shattering.
Here is a distribution over $\X \times \Y$ : point
$x_0$ has probability $1-\beta$, while each of the remaining
$x_i$ has probability $\beta/(d-1)$, where $\beta=2(\eta+2\epsilon)$. At
$x_0$ the response is always $Y=y_0$. At $x_i$, $ 1\leq i \leq d-1$, the
response is $Y = z_i$ with probability $1/2 +  \gamma b_i$ and $Y = y_i$ with probability $1/2 - \gamma b_i$,
where $b_i$ is either $+1$ or $-1$, and $\gamma=2\epsilon/\beta=\epsilon/(\eta+2\epsilon).$

\citet*{BDL09} show that for any active learning algorithm, one can set
$b_0=1$ and all the $b_i$, $i \in \{1, \ldots, d-1\}$ in a certain way
so that the algorithm must make $\Omega(d\eta^2 / \epsilon^2)$ queries
in order to output a classifier of error at most $\eta+\epsilon$ with
probability at least $1/2$.  Building on this, we can show any
interactive learning algorithm seeking a classifier of error at most
$\eta+\epsilon$ must make $\Omega(d\eta^2 / \epsilon^2)$ queries to
succeed with probability at least $1/2$.

Assume that we have an algorithm $\alg$ that works
for the $\CCQM$ model with query complexity
$\SC_{\CCQM}(\epsilon,\delta,\C,\Agnostic(\C,\eta))$. We show how to
use $\alg$ as a subroutine in an active learning algorithm that is
specifically tailored to the above hard set of distributions.

In particular, we can simulate an oracle for the $\CCQM$ algorithm as follows.
Suppose our $\CCQM$ algorithm queries with a set $S_i$ for a label $\ell$.
If $\ell$ is not one of the $y_0,\ldots,y_{d-1},z_0,\ldots,z_{d-1}$ labels, we may immediately return that none exist.
If there exists $x_{i,j} \in S_i$ such
that $x_{i,j}=x_0$ and $\ell = z_0$, then we may simply return to
the algorithm this $(x_{i,j},z_0)$.
Otherwise, we need only make (in expectation)
$\frac{1}{1/2-\gamma}$ active learning queries to respond to the class-conditional query, as follows.
We consider the subset $R_i$ of $S_i$ of points $x_{i,j}$ among those $x_j$ with $\ell \in \{y_j,z_j\}$.
We pick an example
$x_{i}^{(1)}$ at random in $R_i$ and request its label $y_{i}^{(1)}$.
If $x_{i}^{(1)}$ has label $y_{i}^{(1)} = \ell$, then we return to the algorithm $(x_{i}^{(1)},y_{i}^{(1)})$;
otherwise, we continue sampling random $x_{i}^{(2)},x_{i}^{(3)},\ldots$ points from $R_i$ (whose labels have not yet been requested)
and requesting their labels $y_{i}^{(2)},y_{i}^{(3)},\ldots$, until we find one with label $\ell$, at which point we return to the algorithm that example.
If we exhaust $R_i$ without finding such an example, we return to the algorithm that no such point exists.
Since each $x_{i,j}\in R_i$ has probability at least $1/2-\gamma$ of having $y_{i,j} = \ell$,
we can answer any query of $\alg$ using in expectation no more than $\frac{1}{1/2-\gamma}$ label request
queries.

In particular, we can upper bound this number of queries by a geometric random variable and apply concentration inequalities for geometric random variables to bound the total number of label requests, as follows.
Let $A_i$ be a random variable indicating the actual number of label requests we make to answer query number $i$
in the reduction above, before returning a response. We can show that
For $j \leq A_i$, if $h^*(x_{i}^{(j)}) \neq \ell$, let $Z_j = I[y_{i}^{(j)} = \ell]$,
and if $h^*(x_{i}^{(j)}) = \ell$, let $C_j$ be an independent Bernoulli($(1/2-\gamma)/(1/2+\gamma)$) random variable,
and let $Z_j = C_j I[y_{i}^{(j)}=\ell]$.
For $j > A_i$, let $Z_j$ be an independent Bernoulli($1/2-\gamma$) random variable.
Let $B_i = \min\{ j : Z_j = 1\}$.
Since, $\forall j \leq A_i$, $Z_j \leq I[y_{i}^{(j)}=\ell]$, we clearly have $B_i \geq A_i$.
Furthermore, note that the $Z_j$ are independent Bernoulli($1/2-\gamma$) random variables,
so that $B_i$ is a Geometric($1/2-\gamma$) random variable.
By  Lemma~\ref{coins} in Appendix~\ref{useful},
we obtain that with probability at least $3/4$ we have
that if $Q$ is any constant and $\alg$ makes $\leq Q$ queries, then with probability at least $3/4$,
$\sum_i A_i \leq \sum_{i=1}^{Q} B_i \leq \frac{2}{1/2-\gamma} [Q + 4 \ln(4)]$.
Thus, since $\sum_i A_i$ represents the total number of label requests made by this algorithm,
and we know that with probability at least $3/4$ the number of queries is at most
$Q = \SC_{\CCQM}(\epsilon,1/4,\C,\Agnostic(\C,\eta))$,
combining this together with the aforementioned~\citep*{BDL09} lower bound for active learning, we obtain the result.
\end{proof}

\subsection{Upper bound}
\label{known-eta}

In this section we describe an algorithm whose query complexity is $\tilde{O}\left(kd \frac{\beta^2}{\epsilon^2}\right)$.
  For clarity, we start by considering  in the case where we know an upper bound $\beta$ on $\eta$. This procedure (Algorithm~\ref{alg-noise-general}) has two phases:
in Phase~\ref{phaseone}, it uses a robust version of the classic halving algorithm to produce a classifier whose error rate is at most $10(\beta+\epsilon)$ by only using $\tilde{O}\left(kd \log\frac{1}{\epsilon}\right)$  queries.  In Phase~\ref{phasetwo}, we run a refining algorithm that uses $\tilde{O}\left( kd \frac{\beta^2}{\epsilon^2}\right)$ queries to turn the classifier output in phase one into a classifier of error $\eta+\epsilon$.  We will discuss how to remove the assumption of knowing an upper bound $\beta$ on $\eta$, adapting to $\eta$,
 in Section~\ref{sec:unknown-rate}.

\begin{algorithm}
{\small
{\bf Input}: The sequence $(x_1, x_2, ..., ) $;  
values $u$, $\s$, $\delta$;
budget $n$ (optional; default value $= \infty$).
\begin{list}{\labelitemi}{\leftmargin=1.5em}
\item[1.] Let $V$ be a (minimal) $\epsilon$-cover of the space of classifiers $\C$ with respect to  $\D_{X}$.
Let $U$ be $\{x_1, ..., x_u\}$.
\item[2.] Run the Generalized~Halving~Algorithm (Phase~\ref{phaseone}) with input $U$; $V$, $\s$, $c \ln \frac{4 \log_{2} |V|}{\delta}$, $n/2$, and get $h$ returned.
\item[3.] Run the Refining Algorithm (Phase~\ref{phasetwo}) with input $U$, $h$, $n/2$, and get labeled sample $L$ returned.
\item[4.] Find a hypothesis $h' \in V$ of minimum $\err_{L}(h')$.
\end{list}
{\bf Output} Hypothesis $h'$ (and $L$).
}
\caption{{\small General Agnostic Interactive Algorithm}}
\label{alg-noise-general}
\end{algorithm}

Before presenting and analyzing the main steps of our algorithm, we start by describing a useful definition and a useful subroutine (Subroutine~\ref{find-mistake}, Find-Mistake).
Given $V \subseteq \C$, we define the plurality vote classifier as
$$\plur(V)(x)=\argmax_{y \in \Y}(\sum_{h \in V}(\mathbb{I} [h(x) =y]).$$
\begin{subroutine}
{\small
{\bf Input}: The sequence $S = (x_1,x_2,\ldots,x_{m})$; classifier $h$
\begin{list}{\labelitemi}{\leftmargin=1.5em}
\item[1.] For each $y \in \{1,\ldots,k\}$,
\begin{itemize}
\item[(a)] Query the set $\{x \in S : h(x) \neq y\}$ for label $y$
\item[(b)] If received back an example $(x,y)$, return $(x,y)$
\end{itemize}
\item[2.] Return ``none''
\end{list}
}
\caption{{\small Find-Mistake}}
\label{find-mistake}
\end{subroutine}

Note that, if $\err_{S}(h) > 0$, then Find-Mistake returns a labeled example $(x,y)$ with $y$ the
true label of $x$, such that $h(x) \neq y$, and otherwise it returns an indication that no such point exists.

\begin{phase}
{\small
{\bf Input}: The sequence $U=(x_1, x_2, ...,x_{\ps}) $; set of classifiers $V$; values $\s$, $N$; budget $n$ ($n$ optional: default value $= \infty$).
\begin{list}{\labelitemi}{\leftmargin=1.5em}
\item[1.] Set $b=\true$, $t = 0$.
\item[2.] while ($b$ and $t \leq n - N$)
\begin{itemize}
\item[(a)] Draw $S_1$, $S_2$, ..., $S_N$ of size $\s$ uniformly without replacement from $U$.
\item[(b)] For each $i$, call Find-Mistake with arguments $S_i$, and $\plur(V)$.  If it returns a mistake, we record the mistake $(\tx_i,\ty_i)$ it returns.
\item[(c)] If Find-Mistake finds a mistake in more than $N/3$ of the sets, remove from $V$ every $h\in V$ making mistakes on $>N/9$ examples $(\tx_i,\ty_i)$, and set $t \gets t+N$; else $b\gets 0$.
\end{itemize}
\end{list}
{\bf Output} Hypothesis $\plur(V)$.
}\caption{{\small Generalized Halving Algorithm}}
\label{phaseone}
\label{generalized-halving-algorithm}
\end{phase}

\begin{phase}
{\small
{\bf Input}: The sequence $U=(x_1, x_2, ..., x_{\ps}) $; classifier $h$; budget $n$ ($n$ optional: default value $= \infty$).
\begin{list}{\labelitemi}{\leftmargin=1.5em}
\item[1.]  Set $b=1$, $t=0$, $W=U$, $L = \emptyset$.
\item[2.] while ($b$ and $t < n$)
\begin{itemize}
\item[(a)] Call Find-Mistake with arguments $W$, and $h$.
\item[(b)] If it returns a mistake $(\tx,\ty)$, then set $L \gets L \cup \{(\tx,\ty)\}$, $W\gets W\setminus \{\tx\}$, and $t  \gets t+1$.
\item[(c)] Else set $b=0$ and $L \gets L \cup \{(x,h(x)) : x \in W\}$.
\end{itemize}
\end{list}
{\bf Output} Labeled sample $L$.
}\caption{{\small Refining Algorithm}}
\label{phasetwo}
\label{refining-algorithm}
\end{phase}

Lemma~\ref{lem:generalized-halving-algorithm} below characterizes the performance of Phase~\ref{phaseone} and Lemma~\ref{lem:good-labels} characterizes the performance of  Phase~\ref{phasetwo}.
Note that the budget parameter in these methods is only utilized in our later discussion of adaptation to the noise rate.

\begin{lemma}
\label{lem:generalized-halving-algorithm} Assume that some $\hat{h} \in V$ has $\err_U(\hat{h}) \leq \beta$ for $\beta \in [0,1/32]$.
With probability $\geq 1-\delta/2$, running Phase~\ref{generalized-halving-algorithm}
with $U$, and values $\s=\left\lfloor\frac{1}{16 \beta}\right\rfloor$ and $N = c \ln \frac{4\log_{2} |V|}{\delta}$ (for an appropriate constant $c \in (0,\infty)$),
we have that for every round of the loop of Step 2, the following hold.
\begin{list}{\labelitemi}{\leftmargin=0.5em}
\item $\hat{h}$ makes mistakes on at most $N/9$ of the returned $(\tx_i,\ty_i)$ examples.
\item If $\err_U(\plur(V)) \geq 10 \beta$, then Find-Mistake returns a mistake for $\plur(V)$ on $> N/3$ of the sets.
\item If Find-Mistake returns a mistake for $\plur(V)$ on $> N/3$ of the sets $S_i$, then the number of $h$ in $V$ making mistakes on $> N/9$ of the returned $(\tx_i,\ty_i)$ examples in Step $3(b)$ is at least $(1/4) |V|$.
\end{list}
\end{lemma}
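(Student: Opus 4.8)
The plan is to prove the three items separately and then take a union bound over the rounds of the loop in Step~2. The first two items are concentration statements about the random subsets $S_1,\dots,S_N$ drawn in a single round; the third item is a purely combinatorial fact about $\plur(V)$ and the recorded pairs, with no randomness involved. Throughout, fix a round and condition on the history up to that round, so that $V$ is a fixed set of classifiers; note that since the first item guarantees $\hat h$ is never removed, $\hat h$ remains in $V$ at the start of every round, and $\err_U(\hat h)\le\beta$ holds throughout ($U$ being fixed).

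I would start with the third item, which is the combinatorial core. Let $m$ be the number of sets $S_i$ on which Find-Mistake returned a mistake for $\plur(V)$; by hypothesis $m>N/3$, and the recorded pairs $(\tx_i,\ty_i)$ are exactly these $m$ pairs, each with $\ty_i$ the true label of $\tx_i$ and $\plur(V)(\tx_i)\neq\ty_i$. The key observation is that whenever $\plur(V)$ disagrees with the true label $y^{*}$ of a point $x$, at least half the classifiers in $V$ also disagree with $y^{*}$ at $x$: if $c^{*}=\plur(V)(x)\neq y^{*}$ then $\#\{h\in V: h(x)=y^{*}\}\le\#\{h\in V: h(x)=c^{*}\}$ since $c^{*}$ is a plurality winner, and these two counts sum to at most $|V|$, forcing $\#\{h: h(x)=y^{*}\}\le|V|/2$. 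Hence $\sum_{h\in V}\#\{i\le m: h(\tx_i)\neq\ty_i\}\ge m|V|/2$. Now suppose fewer than $|V|/4$ classifiers make mistakes on more than $N/9$ of the recorded pairs; bounding each such classifier's contribution to that sum by $m$ and each of the remaining (more than $3|V|/4$) classifiers' contributions by $N/9$, the sum is strictly less than $\tfrac{|V|}{4}m+\tfrac{3|V|}{4}\cdot\tfrac{N}{9}=\tfrac{|V|m}{4}+\tfrac{|V|N}{12}$, and since $m>N/3$ gives $\tfrac{N}{12}<\tfrac{m}{4}$, this is strictly less than $\tfrac{|V|m}{2}$, a contradiction. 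So at least $|V|/4$ classifiers are removed in Step~$3(c)$; in particular $|V|$ shrinks by a factor of at most $3/4$ per round, so there are at most $\log_{4/3}|V|+1$ rounds.

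For the first two items I would use one concentration tool: a uniform-without-replacement subset $S_i$ of size $\s=\lfloor 1/(16\beta)\rfloor$ meets a fixed set $E\subseteq U$ of density $\rho$ with probability at least $1-(1-\rho)^{\s}$ and, by a union bound over the $\s$ draws, at most $\s\rho(1+o(1))$ when $u\gg\s$; moreover Find-Mistake returns a mistake for $\plur(V)$ on $S_i$ exactly when $S_i$ meets the error set of $\plur(V)$. For the first item: $\hat h$ can mistake a recorded pair $(\tx_i,\ty_i)$ only when $S_i$ contains an $\hat h$-error point, which happens with probability at most $\s\,\err_U(\hat h)(1+o(1))\le\s\beta(1+o(1))\le 1/15$ (using $\s\le 1/(16\beta)$), so the expected number of such $i$ is comfortably below $N/9$ and a Chernoff bound puts it at $\le N/9$ except with probability $e^{-\Omega(N)}$. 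For the second item: if $\err_U(\plur(V))\ge 10\beta$ then $S_i$ meets the error set of $\plur(V)$ with probability at least $1-(1-10\beta)^{\s}$, and a short calculation using $\s\ge 1/(16\beta)-1$ and $\beta\le 1/32$ shows this is bounded below by an absolute constant strictly exceeding $1/3$ for every admissible $\beta$; hence the expected number of sets returning a mistake strictly exceeds $N/3$ with room to spare, and a Chernoff bound gives more than $N/3$ except with probability $e^{-\Omega(N)}$.

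Finally I would combine these. Analyzing round $r$ conditionally on rounds $1,\dots,r-1$ having succeeded (so $\hat h\in V$ and $\err_U(\hat h)\le\beta$), items~1 and~2 fail in round $r$ with probability at most $e^{-\Omega(N)}$; summing over the at most $\log_{4/3}|V|+1$ rounds --- a count that relies only on the deterministic item~3 --- and plugging in $N=c\ln(4\log_2|V|/\delta)$ for a sufficiently large absolute constant $c$ bounds the total failure probability by $\delta/2$. I expect two points to require the most care: (i) the plurality-vote observation and the bookkeeping in item~3, where the constant $1/4$ comes out only because the non-removed classifiers (more than $3|V|/4$ of them) are charged at rate $N/9$ and the bound $m>N/3$ is invoked at exactly the right step; and (ii) checking that $S_i$ catches a $\plur(V)$-mistake with probability bounded away from $1/3$ uniformly over $\beta\in[0,1/32]$ despite the floor in $\s=\lfloor 1/(16\beta)\rfloor$, which is what forces $c$ to be a large (but absolute) constant.
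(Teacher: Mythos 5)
Your proposal is correct and follows essentially the same route as the paper's own proof: a Markov/union-bound estimate for item 1 (both give probability at most $\s\beta\le 1/16$ that a single $S_i$ contains a $\hat h$-error), the bound $1-(1-10\beta)^{\s}>1/3$ for item 2 (the paper optimizes this to $\ge 0.37$ over $\beta\in(0,1/32]$), and the same double-counting argument for item 3 --- the paper phrases it as a bipartite graph between classifiers and recorded pairs, but the inequality $\frac{1}{2}|V|m \le \alpha|V|\frac{N}{9}+(1-\alpha)|V|m$ with $m>N/3$ is identical to yours, and both rest on the observation that $\plur(V)(\tx_i)\neq\ty_i$ forces at most $|V|/2$ classifiers to agree with $\ty_i$. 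One small improvement on your side: you spell out why at most $|V|/2$ classifiers can vote $\ty_i$ (if the plurality label beats $\ty_i$ and both counts sum to at most $|V|$, the $\ty_i$-count is at most $|V|/2$), whereas the paper asserts this without justification; you also track the round count as $\log_{4/3}|V|+1$ rather than the paper's looser $\log_2|V|$, which is the correct count given the $1/4$ shrink factor and is absorbed into the constant $c$.
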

\begin{proof}
Phase~\ref{generalized-halving-algorithm} and Lemma~\ref{lem:generalized-halving-algorithm} are inspired by the analysis of \citet*{hanneke:07b}.
In the following, by a \emph{noisy} example we mean any $x_i$ such that $\hat{h}(x_i) \neq y_i$.
 The expected number of noisy points in any given set $S_i$ is at most $1/16$, which (by Markov's inequality)
 implies the probability $S_i$ contains a noisy point is at most $1/16$.  Therefore, the expected number
 of sets $S_i$ with a noisy point in them is at most $N / 16$, so
 by a Chernoff bound, with probability at least $1-\delta/(4 \log_{2} |V|)$ we have that at most
$N/9$ sets $S_i$ contain any noisy point, establishing claim 1.

 Assume that $\err_U(\plur(V)) \geq 10 \beta$. The probability that there is a point $\tx_i$ in $S_i$ such that $\plur(V)$ labels $\tx_i$
 differently from $\ty_i$ is
 $\geq 1-(1-10\beta)^{\s} \geq .37$ (discovered by direct optimization).
 So (for an appropriate value of $c > 0$ in $N$) by a Chernoff bound, with probability at least $1-\delta/(4 \log_{2} |V|)$, at least
 $N/3$ of the sets $S_i$ contain a point $\tx_i$ such that $\plur(V)(\tx_i) \neq \ty_i$, which establishes claim 2.
 Via a combinatorial argument, this then implies with probability at least $1-\delta/(4 \log_2 |V|)$,
 at least $|V|/4$ of the hypotheses will make mistakes on more than $N/9$ of the sets $S_i$.
  To see this consider the bipartite graph
  where on the left hand side we have all the classifiers in $V$ and on the right hand side we have all the returned $(\tx_i,\ty_i)$ examples.
  Let us put an edge between a node $i$ on the left and a node $j$ on the right if the hypothesis $h_i$ associated to node $i$ makes a mistake
  on $(\tx_i,\ty_i)$. Let $M$ be the number of vertices in the right hand side.
  Clearly, the total number of edges in the graph is at least $(1/2) |V| |M|$, since at most $|V|/2$ classifiers label $\tx_i$ as $\ty_i$.
  Let $\alpha |V|$ be the number of classifiers in $V$ that make mistakes
  on at most $N/9$ $(\tx_i,\ty_i)$ examples. The total number of edges in the graph is then upper bounded by
 $\alpha |V| N/9 + (1-\alpha) |V| M.$
Therefore,  $$(1/2) |V| |M| \leq \alpha |V| N/9 + (1-\alpha) |V| M,$$
which implies
  $$ |V| |M| (\alpha-1/2) \leq \alpha |V| N/9 .$$
  Applying the lower bound $M \geq N/3$,
  we get  $(N/3) |V| (\alpha-1/2) \leq \alpha |V| N/9$, so
  $\alpha \leq 3/4$.  This establishes claim 3.

A union bound over the above two events, as well as over the iterations of the loop (of which there are at most $\log_{2} |V|$ due to the third claim of this lemma) obtains the claimed overall $1-\delta/2$ probability.
 \end{proof}

\begin{lemma}
\label{lem:good-labels}
Suppose some $\hat{h}$ has $\err_{U}(\hat{h}) \leq \beta$, for some $\beta \in [0,1/32]$.
Running Phase~\ref{refining-algorithm} with parameters $U$, $\hat{h}$, and any budget $n$,
if $L$ is the returned sample, and $|L| = |U|$, then every $(x_i,y) \in L$ has $y = y_i$ (i.e.,
the labels are in agreement with the oracle's labels); furthermore, $|L|=|U|$ definitely happens for any
$n \geq \beta |U| + 1$.
\end{lemma}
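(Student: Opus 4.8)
The plan is to analyze the Refining Algorithm (Phase~\ref{refining-algorithm}) directly, tracking its two state variables, the working set $W$ and the labeled sample $L$. Only two ingredients are needed: the input/output guarantee of Find-Mistake recorded in the note following Subroutine~\ref{find-mistake}, and the hypothesis $\err_U(\hat h)\le\beta$. (Here the classifier passed to the algorithm is $\hat h$ itself.)

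First I would establish the stronger fact that \emph{every} pair ever inserted into $L$ carries the correct oracle label, which in particular yields the first claim whenever $|L|=|U|$. Pairs enter $L$ in only two places in the loop. In step~$2(b)$ the pair $(\tx,\ty)$ is whatever Find-Mistake, called on $W$ with classifier $\hat h$, returned; by the Find-Mistake note, when it returns a pair then $\ty$ is the true label of $\tx$. In step~$2(c)$ the pairs $\{(x,\hat h(x)):x\in W\}$ are inserted only in the ``else'' branch, i.e.\ when Find-Mistake returned ``none'' on $W$ with $\hat h$, which by the same note means $\err_W(\hat h)=0$, so $\hat h(x)$ equals the true label of $x$ for every remaining $x\in W$. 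Moreover each point of $U$ enters $L$ at most once, since step~$2(b)$ simultaneously deletes $\tx$ from $W$ and step~$2(c)$ runs at most once (it sets $b=0$). Hence $L$ is always a correctly-labeled subsample of $U$.

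Second I would bound the number of loop iterations to get the ``$|L|=|U|$'' claim. Let $m^\star=|U|\,\err_U(\hat h)$ be the number of $x\in U$ on which $\hat h$ disagrees with the true label; then $m^\star\le\beta|U|$, and being an integer, $m^\star\le\lfloor\beta|U|\rfloor$. Each execution of step~$2(b)$ removes from $W$ exactly one point on which $\hat h$ errs, so that count in $W$ strictly decreases, and after at most $m^\star$ executions of step~$2(b)$ the set $W$ contains no point on which $\hat h$ errs. At that moment $t=m^\star$ and $b=1$, and since $n$ is an integer with $n\ge\beta|U|+1$ we have $t=m^\star\le\lfloor\beta|U|\rfloor<\beta|U|+1\le n$, so the while-loop runs once more; on that pass Find-Mistake returns ``none'', step~$2(c)$ fires, appending $\{(x,\hat h(x)):x\in W\}$ and bringing $|L|$ from $m^\star$ up to $m^\star+(|U|-m^\star)=|U|$ before $b$ is set to $0$. (If $m^\star=0$ this already happens on the first pass, which still executes since $n\ge\beta|U|+1\ge1$.) Thus $|L|=|U|$ is guaranteed.

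I do not expect a genuine obstacle here; the whole argument is bookkeeping on the loop, and the only place requiring the slightest care is the off-by-one in the budget bound, where one must use integrality of the mistake count $m^\star$ so that $n\ge\beta|U|+1$ leaves exactly enough room for the single final pass on which step~$2(c)$ terminates the loop with $L$ of full size.
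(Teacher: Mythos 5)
Your proof is correct and takes essentially the same route as the paper's: examine the two places pairs enter $L$, use the Find-Mistake guarantee in each, and count the number of mistakes in $U$ to bound the loop. You are slightly more careful than the paper about the budget accounting — the paper bounds the number of \emph{calls to Find-Mistake} by $\beta|U|+1$, whereas the loop guard is actually on $t$ (which counts only the calls that found a mistake); you correctly note that $t$ never exceeds $m^\star \le \beta|U| < n$, so the loop is never cut short by the budget and must exit via step $2(c)$, which fills $L$ to full size. That small piece of loop bookkeeping is the only substantive point the paper's proof leaves implicit, and you handled it correctly.
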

\begin{proof}
Every call to Find-Mistake returns a new mistake for $\hat{h}$ from $U$, except the last call, and
since there are only $\beta |U|$ such mistakes, the procedure requires only $\beta |U|+1$ calls to Find-Mistake.
Furthermore, every label was either given to us by the oracle, or was assigned at the end, and in this
latter case the oracle has certified that they are correct.

Formally, if $|L| = |U|$, then either every $x \in U$ was returned as some $(\tx,\ty)$ pair in Step 2.b,
or we reached Step 2.c.  In the former case, these $\ty$ labels are the oracle's actual responses,
and thus correspond to the true labels.
In the latter case, every element of $L$ added prior to reaching 2.c was returned by the oracle,
and is therefore the true label.
Every element $(x_i,y) \in L$ added in Step 2.c has label $\hat{h}(x_i)$, which the
oracle has just told us is correct in Find-Mistake (meaning we definitely have $\hat{h}(x_i) = y_i$).
Thus, in either case, the labels are in agreement with the true labels.
Finally, note that each call to Find-Mistake either returns a mistake for $\hat{h}$ we have not previously received,
or is the final such call.  Since there are at most $\beta |U|$ mistakes in total, we can have at most $\beta |U| + 1$ calls to Find-Mistake.
\end{proof}

We are now ready to present our main upper bounds for the agnostic noise model.

\begin{theorem}
\label{main-upper}
Suppose $\beta \geq \eta$, and $\beta + \epsilon \leq 1/32$.
Running Algorithm~\ref{alg-noise-general} with parameters
$u = O(d ((\beta + \epsilon)/\epsilon^2)\log(k/\epsilon\delta))$,
$\s=\left\lfloor\frac{1}{16 (\beta+\epsilon)}\right\rfloor$, and $\delta$,
with probability at least $1-\delta$ it produces a classifier $h^{\prime}$ with $\err(h^{\prime}) \leq \eta + \epsilon$
using a number of queries
$O\left(k d \frac{\beta^2}{\epsilon^2} \log \frac{1}{\epsilon \delta}  + k d \log \frac{\log(1/\epsilon)}{\delta} \log\frac{1}{\epsilon}\right)$.
\end{theorem}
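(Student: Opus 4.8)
The plan is to follow one ``good'' element of the cover $V$ through both phases of Algorithm~\ref{alg-noise-general} and show that the hypothesis returned in Step~4 cannot be much worse than it. In Step~1, take $V$ to be a minimal $\epsilon'$-cover of $\C$ with respect to $\D_{X}$ for a suitable constant fraction $\epsilon' = \epsilon/c_0$ (which is what Step~1 produces up to a constant; equivalently one may rescale $\epsilon$ by a constant at the end). Standard packing bounds for classes of Natarajan dimension $d$ over $k$ labels give $\log|V| = O(d\log(k/\epsilon))$, which accounts for the $d$ and $\log(1/\epsilon)$ factors appearing in the query bound (for constant $k$). Fix $\hat{h}\in V$ with $\P_{\D_{X}}(\hat{h}(X)\neq h^*(X)) \le \epsilon'$, so that $\err(\hat{h}) \le \eta + \epsilon'$.

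Next I would establish the statistical backbone: a one-sided \emph{relative} (Bernstein / relative-VC type) uniform-convergence statement over the finite set $V$. A union of per-hypothesis Bernstein bounds over $V$ gives, with probability $\ge 1-\delta/3$ over the draw of $U = \{x_1,\ldots,x_u\}$, that for every $h \in V$,
\[ |\err_{U}(h) - \err(h)| \le O\!\left(\sqrt{\max\{\err(h),\err_{U}(h)\}\cdot\frac{\log|V| + \log(1/\delta)}{u}}\right) + O\!\left(\frac{\log|V| + \log(1/\delta)}{u}\right). \]
The point of the multiplicative form is that every hypothesis we ever reason about has (empirical or true) error $O(\beta+\epsilon)$, so the stated $u = O(d((\beta+\epsilon)/\epsilon^2)\log(k/\epsilon\delta))$ forces the deviation below $\epsilon/4$ there. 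Concretely, on this event: $\err_{U}(\hat{h}) \le \eta + 3\epsilon/4 \le \beta+\epsilon$ (using $\eta \le \beta$), and every $h \in V$ with $\err_{U}(h) \le 2(\beta+\epsilon)$ has $\err(h) \le \err_{U}(h) + \epsilon/4$. I expect this relative-deviation bound to be the main obstacle: a naive additive Hoeffding bound would only give $u \sim d/\epsilon^2$ and thereby miss the advertised factor $\beta+\epsilon$ in the sample --- hence query --- complexity.

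Given the good cover element, the remainder is assembly of the two lemmas. Since $\err_{U}(\hat{h}) \le \beta+\epsilon$ and $\beta+\epsilon \le 1/32$, Lemma~\ref{lem:generalized-halving-algorithm}, applied with its ``$\beta$'' set to $\beta+\epsilon$ --- matching the choices $\s = \lfloor 1/(16(\beta+\epsilon))\rfloor$ and $N = c\ln(4\log_{2}|V|/\delta)$ passed in Step~2 --- yields, with probability $\ge 1-\delta/2$: $\hat{h}$ is never removed from $V$ (claim~1); every continuing iteration of the loop of Phase~\ref{generalized-halving-algorithm} discards at least $|V|/4$ hypotheses (claim~3), so the loop halts after $O(\log|V|)$ iterations; and on halting, its output $h$ satisfies $\err_{U}(h) < 10(\beta+\epsilon)$ (contrapositive of claim~2). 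Each iteration issues $N$ calls to Find-Mistake of at most $k$ queries each, so Phase~\ref{phaseone} uses $O(kN\log|V|)$ queries, which is the second term of the stated bound (for constant $k$, up to the precise nesting of logarithms). Feeding $h$ into Phase~\ref{refining-algorithm} with the default (infinite) budget, the argument of Lemma~\ref{lem:good-labels} --- which needs only $\err_{U}(h) < 1$, satisfied since $10(\beta+\epsilon) \le 10/32$, so the $1/32$ appearing in that lemma's statement is not essential --- shows the returned sample $L$ is exactly $U$ re-labeled with the oracle's true labels, obtained with at most about $10(\beta+\epsilon)|U|+1$ calls to Find-Mistake, i.e.\ $O(k(\beta+\epsilon)u) = O(kd(\beta+\epsilon)^2\epsilon^{-2}\log(k/\epsilon\delta))$ queries; since $(\beta+\epsilon)^2/\epsilon^2 = O(\beta^2/\epsilon^2 + 1)$, this is the first term of the bound plus a lower-order term absorbed into the second.

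Finally, Step~4 returns $h' = \argmin_{h\in V}\err_{L}(h) = \argmin_{h\in V}\err_{U}(h)$, so $\err_{U}(h') \le \err_{U}(\hat{h}) \le \eta + 3\epsilon/4 \le 2(\beta+\epsilon)$, and the relative-deviation event gives $\err(h') \le \err_{U}(h') + \epsilon/4 \le \eta+\epsilon$, as required. Summing the Phase~\ref{phaseone} and Phase~\ref{phasetwo} query counts gives the claimed total, and a union bound over the two failure events (total probability at most $5\delta/6 \le \delta$) finishes the argument. Besides the relative uniform-convergence bound, the only care needed is the bookkeeping of constants relating $\epsilon'$, the deviation budgets, and the thresholds $\beta+\epsilon$ versus $10(\beta+\epsilon)$, and checking that the (unset, hence infinite) budget $n$ never truncates either phase.
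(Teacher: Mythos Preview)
Your proposal is correct and follows essentially the same approach as the paper's proof: establish empirical error control on $U$ via a multiplicative (Bernstein-type) deviation bound over $V$, invoke Lemma~\ref{lem:generalized-halving-algorithm} to bound Phase~\ref{phaseone} and Lemma~\ref{lem:good-labels} to bound Phase~\ref{phasetwo}, then sum the query counts. If anything, you are slightly more careful than the paper in explicitly tracking the cover element $\hat{h}\in V$ near $h^*$ (the paper applies Lemma~\ref{lem:generalized-halving-algorithm} with $h^*$ directly, glossing over that $h^*$ need not lie in $V$) and in articulating why the \emph{relative} form of the deviation bound is what justifies the $(\beta+\epsilon)$ factor in $u$.
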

\begin{proof}
We have chosen $u$ large enough so that $\err_{U}(h^*) \leq \eta + \epsilon \leq \beta + \epsilon$, with probability at least $1-\delta/4$,
by a (multiplicative) Chernoff bound.
By Lemma~\ref{lem:generalized-halving-algorithm}, we know that with probability $1-\delta/2$,
$h^*$ is never discarded in Step 2(c) in Phase~\ref{generalized-halving-algorithm}, and as long as $\err_U(\plur(V)) \geq 10 (\beta+\epsilon)$,
then we cut the set $|V|$ by a constant factor.  So, with probability $1-3\delta/4$, after at most $O(k N \log(|V|))$ queries,
Phase~\ref{generalized-halving-algorithm} halts with the guarantee that  $\err_U(\plur(V)) \leq  10  (\beta+\epsilon)$.
Thus, by Lemma~\ref{lem:good-labels}, the execution of Phase~\ref{refining-algorithm} returns a set $L$ with the true labels after at most
$(10 (\beta + \epsilon) u + 1)k$ queries.

Furthermore, we can choose the $\epsilon$-cover $V$ so that $|V| \leq 4 (c k^2 / \epsilon)^{d}$ for an appropriate constant $c$ \citep*{van-der-Vaart:96,haussler:95}.  

Therefore, by Chernoff and union bounds, we have chosen $u$ large enough so that the $h^{\prime}$ of minimal $\err_{U}(h^{\prime})$
has $\err(h^{\prime}) \leq \eta + \epsilon$ with probability at least $1-\delta/4$.
Combining the above events by a union bound, with probability $1-\delta$,
the $h^{\prime}$ chosen at the conclusion of Algorithm~\ref{alg-noise-general} has
$\err(h^{\prime}) \leq \eta + \epsilon$ and the total number of queries is at most
$$kN \log_{4/3}(|V|) + k(10 (\beta + \epsilon) u + 1) = O\left(kd \log \frac{d \log (k/\epsilon)}{\delta} \log\frac{1}{\epsilon} + kd \frac{(\beta + \epsilon)^2}{\epsilon^2} \log\frac{k}{\epsilon \delta}\right).$$
\end{proof}

In particular, if we take $\beta = \eta$, Theorem~\ref{main-upper} implies the upper bound part of Theorem~\ref{thm:main-agnostic}.

\noindent {\bf Note}: It is sometimes desirable to restrict the size of the sample we make the query for, so that
the oracle does not need to sort through an extremely large sample searching for a mistake.
To this end, we can run Phase~\ref{refining-algorithm} on chunks of size $1/(\eta+\epsilon)$ from $U$, and then union
the resulting labeled samples to form $L$.  The number of queries required for this is still bounded
by the desired quantity.

\label{sec:unknown-rate}

In practice, knowledge of an upper bound $\beta$ reasonably close to $\eta$ is typically not available.
As such, it is important to design algorithms that adapt to the unknown value of $\eta$ using only
observable quantities.  The following theorem indicates this is possible in our setting,
without significant loss in query complexity.

\begin{theorem}
\label{thm:adaptive-agnostic}
There exists an algorithm that is independent of $\eta$
and $\forall \eta \in [0,1/2)$ achieves query complexity
$\SC_{\CCQM}(\epsilon,\delta,\C,\Agnostic(\C,\alpha)) =
\tilde{O}\left(kd \frac{\eta^2}{\epsilon^2}\right)$.
\end{theorem}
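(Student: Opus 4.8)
\subsection{Adapting to the Noise Rate}
\label{sec:adaptive-proof-sketch}

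The plan is a standard guess-and-double over the noise bound, wrapped around Algorithm~\ref{alg-noise-general}, using the optional \emph{budget} parameter of Phase~\ref{phaseone} and Phase~\ref{phasetwo} together with an \emph{observable} success test. Concretely, for $i = 0,1,2,\ldots$ we set $\beta^{(i)} = 2^{i}\epsilon$ and $\delta^{(i)} = \delta/(i+2)^2$, and in round $i$ we run Algorithm~\ref{alg-noise-general} with exactly the parameters prescribed by Theorem~\ref{main-upper} for $\beta = \beta^{(i)}$ (so $u^{(i)} = O(d((\beta^{(i)}+\epsilon)/\epsilon^2)\log(k/\epsilon\delta^{(i)}))$, $\s^{(i)} = \lfloor 1/(16(\beta^{(i)}+\epsilon))\rfloor$, confidence $\delta^{(i)}$), but now with a finite budget $n^{(i)}$ set to a constant times the total query bound of Theorem~\ref{main-upper} at $\beta^{(i)}$, split evenly between the two phases. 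We run rounds in increasing order of $i$ and halt at the first round that ``succeeds'' (defined below), returning its output $h'$. If no round up to $i_{\max} = \lceil\log_2(1/(32\epsilon))\rceil$ succeeds, we fall back to drawing $\tilde{O}(kd/\epsilon^2)$ labels directly (each label costs at most $k$ queries, via queries of singleton sets) and outputting the empirical risk minimizer over an $\epsilon$-cover; since reaching this case forces $\eta = \Omega(1)$, this fallback is already within $\tilde{O}(kd\eta^2/\epsilon^2)$.

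The success test for round $i$ is: (a) Phase~\ref{phaseone} exits its Step~2 loop with its flag set to false \emph{within} budget, (b) Phase~\ref{phasetwo} returns a sample $L$ with $|L| = u^{(i)}$ (equivalently, it too exits with flag false within budget), and (c) $u^{(i)} \geq C' d (\hat\eta_i + \epsilon)\log(|V|/\delta^{(i)})/\epsilon^2$, where $\hat\eta_i = \min_{h \in V}\err_L(h)$ is the observed empirical optimum and $C'$ is the constant from the agnostic uniform-convergence bound. This test is \emph{sound}: by the argument of Lemma~\ref{lem:good-labels}, whenever Phase~\ref{phasetwo} reports completion the labels in $L$ agree with the true labels \emph{regardless of whether $\beta^{(i)} \ge \eta$} (the budget only controls whether completion is reached, not the correctness of the labels it assigns); hence $\hat\eta_i$ is genuinely the empirical optimum on $u^{(i)}$ correctly-labeled i.i.d.\ points, and (c) together with a relative uniform-convergence bound over the $\epsilon$-cover $V$ (with $|V| = O((k^2/\epsilon)^d)$) certifies $\err(h') \le \eta + \epsilon$ with probability $1-\delta^{(i)}$. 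The test is also \emph{complete}: let $i^* = \lceil\log_2(\eta/\epsilon)\rceil$, so $\beta^{(i^*)} \in [\eta,2\eta)$; since $\eta < 1/32$ we have $\beta^{(i^*)} + \epsilon$ in the regime covered by Theorem~\ref{main-upper} (apart from the $\Omega(1)$-noise boundary, which is handled by the fallback), and the proof of Theorem~\ref{main-upper} (invoking Lemma~\ref{lem:generalized-halving-algorithm} and Lemma~\ref{lem:good-labels}) shows that with probability $1-\delta^{(i^*)}$, conditions (a), (b), (c) all hold and the chosen budget suffices, so round $i^*$ succeeds. Thus the procedure halts no later than round $i^*$.

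For the query complexity, note each round $i < i^*$ consumes at most $n^{(i)} = \tilde{O}\!\left(kd (\beta^{(i)})^2/\epsilon^2 + kd\log(1/\delta^{(i)})\log(1/\epsilon)\right)$ queries. Summing the first term over $i \le i^*$ gives a geometric series dominated by its last term, $\tilde{O}(kd(\beta^{(i^*)})^2/\epsilon^2) = \tilde{O}(kd\eta^2/\epsilon^2)$; summing the second term over the $i^* = O(\log(1/\epsilon))$ rounds contributes only $\tilde{O}(kd)$ up to logarithmic factors. Round $i^*$ itself costs $\tilde{O}(kd\eta^2/\epsilon^2)$ by Theorem~\ref{main-upper} applied with $\beta = \beta^{(i^*)} < 2\eta$. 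A union bound over all rounds, using $\sum_{i\ge 0}\delta^{(i)} \le \delta$, bounds the overall failure probability by $\delta$. Hence the algorithm is independent of $\eta$ and achieves $\SC_{\CCQM}(\epsilon,\delta,\C,\Agnostic(\C,\eta)) = \tilde{O}(kd\eta^2/\epsilon^2)$ for every $\eta \in [0,1/2)$.

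The main obstacle is designing the stopping rule so that it is \emph{simultaneously} sound and complete at the same $\Theta(\eta)$ threshold. Soundness is subtle because for guesses $\beta^{(i)} < \eta$ the precondition of Lemma~\ref{lem:generalized-halving-algorithm} fails, so Phase~\ref{phaseone} carries no guarantee; the key that rescues us is that Phase~\ref{phasetwo}'s correctness of the labels it outputs (Lemma~\ref{lem:good-labels}) is \emph{unconditional}, which lets us replace any inference about $\beta^{(i)}$ versus $\eta$ by a purely observable comparison of $u^{(i)}$ against the empirical optimum $\hat\eta_i$. Completeness requires showing that whenever $\beta^{(i)}$ is a small constant factor below $\eta$, the allotted budget $n^{(i)} = \tilde{O}(kd(\beta^{(i)})^2/\epsilon^2 + \cdots)$ is provably insufficient for Phase~\ref{phasetwo} to enumerate the $\Theta(\eta u^{(i)})$ mistakes that \emph{every} hypothesis must make on the (true-labeled) sample, so such a round is detectably aborted rather than silently returning a poor classifier. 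Verifying that these two thresholds coincide (up to constants), and that the geometric budget schedule keeps the cumulative cost of the aborted rounds at $\tilde{O}(kd\eta^2/\epsilon^2)$, is the technical heart of the argument; the remaining steps are routine Chernoff/union-bound bookkeeping and an appeal to the agnostic sample-complexity bound for the $\epsilon$-cover.
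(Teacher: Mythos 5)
Your proposal is correct in its essentials and shares the key insight with the paper's proof: Lemma~\ref{lem:good-labels}'s guarantee that any returned sample $L$ with $|L| = u$ carries the \emph{true} labels holds unconditionally (regardless of whether the guessed $\beta$ upper-bounds $\eta$), and this is exactly what makes an observable stopping rule possible. The organization differs, though. The paper runs a two-dimensional search: the outer loop doubles a global query budget $n_j$, and for each budget the inner loop sweeps noise guesses $\eta_i = 2^{1-i}$ downward, taking the first (largest) guess for which the Refining phase completes; the stopping rule then evaluates a Vapnik-type relative confidence bound $\Epsilon_j$ on the sample $\hat{L}_j$ and halts at the first $j$ with $\err_{\hat{L}_j}(\hat{h}_j) - \min_{h\in\C}\err_{\hat{L}_j}(h) + \Epsilon_j \leq \epsilon$. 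You collapse this to a single sweep over $\beta^{(i)}$ with the budget derived directly from Theorem~\ref{main-upper} at $\beta^{(i)}$, and your stopping test checks that the (certified-correct) sample was large enough relative to the observed empirical optimum $\hat\eta_i$. These are two equivalent ways of packaging the same guess-and-double-plus-data-dependent-certificate idea; your version is arguably cleaner (one loop, budget coupled to the guess), and you make explicit the $\eta \geq 1/32$ fallback that the paper glosses over. The paper's version has the slight advantage that its certificate ($\Epsilon_j \leq \epsilon$-type test) is a direct confidence bound on excess error, which requires no re-derivation of ERM sample-complexity constants.

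One concrete issue to fix in your condition (c): as written, the threshold $C' d(\hat\eta_i+\epsilon)\log(|V|/\delta^{(i)})/\epsilon^2$ double-counts the dimension, since $\log|V| = \Theta(d\log(k/\epsilon))$ already contains a factor of $d$. This threshold is therefore a factor $\Theta(d)$ larger than $u^{(i)} = O\left(d(\beta^{(i)}+\epsilon)\log(k/\epsilon\delta^{(i)})/\epsilon^2\right)$ at $i = i^*$, so completeness would fail (the test could never pass until $\beta^{(i)} = \Omega(d\eta)$, costing an extra factor $d^2$ in queries). The intended test is $u^{(i)} \geq C'(\hat\eta_i+\epsilon)\log\!\left(|V|/\delta^{(i)}\right)/\epsilon^2$, without the leading $d$; with that correction the soundness/completeness matching you describe goes through and the rest of your accounting is right.
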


\begin{proof}
We consider the proof of this theorem in two stages, with the following intuitive motivation.
First, note that if we set the budget parameter $n$ large enough
(at roughly $1/k$ times the value of the query complexity bound of Theorem~\ref{thm:main-agnostic}),
then the largest value of $\beta$ for which the algorithm (with parameters as in Theorem~\ref{main-upper})
produces $L$ with $|L| = u$ has $\beta \geq \eta$, so that it produces $h^{\prime}$ with $\err(h^{\prime}) \leq \eta + \epsilon$.
So for a given budget $n$, we can simply run the algorithm for each $\beta$ value in a log-scale grid
of $[\epsilon,1]$, and take the $h^{\prime}$ for the largest such $\beta$ with $|L| = u$.
The second part of the problem then becomes determining an appropriately large budget $n$,
so that this works.  For this, we can simply search for such a value by a guess-and-double
technique, where for each $n$ we check whether it is large enough by evaluating a standard
confidence bound on the excess error rate; the key that allows this to work is that, if $|L| = u$,
then the set $L$ is an i.i.d. $\D_{XY}$-distributed sequence of labeled examples, so that we
can use known confidence bounds for working with sequences of random labeled examples.
The details of this strategy follow.

Consider values $n_j = 2^{j}$ for $j \in \nats$, and define the following procedure.
We can consider a sequence of values $\eta_i = 2^{1-i}$ for $i \leq \log_{2}(1/\epsilon)$.
For each $i = 1,2, \ldots,\log_{2}(1/\epsilon)$, we run Algorithm~\ref{alg-noise-general}
with parameters $$u = u_i = O(d((\eta_i + \epsilon) / \epsilon^2) \log(k /\epsilon \delta)),$$ 
$$\s = \s_i = \frac{1}{16 (\eta_i + \epsilon)},~~~\delta_i = \delta / (8 \log_{2}(1/\epsilon))$$
and budget parameter $n_{j} / \log_{2}(1/\epsilon)$.  Let $h_{j i}$ and $L_{j i}$ denote the return values
from this execution of Algorithm~\ref{alg-noise-general}, and let $\hat{h}_{j}$ and $\hat{L}_{j}$ denote the values $h_{j i}$ and $L_{j i}$,
respectively, for the smallest value of $i$ for which $|L_{j i}| = u_i$: that is, for which
the execution of Phase~\ref{refining-algorithm} ran to completion.

Note that for some $j$ with
$n_j = O\left(d \frac{\eta^2}{\epsilon^2} \log \frac{k\log_{2}(1/\epsilon)}{\epsilon \delta}  + d \log \frac{\log^2(1/\epsilon)}{\delta} \log\frac{k}{\epsilon}\right)\log_{2}\frac{1}{\epsilon}$,
Theorem~\ref{main-upper} implies that with probability $1-\delta/4$,
every $i \leq \lfloor \log_{2}(1 / \eta) \rfloor$ with $|L_{j i}| = u_i$ has $\err(h_{j i}) \leq \eta + \epsilon/2$,
and $|L_{j i}| = u_i$ for at least one such $i$ value: namely, $i = \lfloor \log_{2}(1 / \max\{\eta,\epsilon\}) \rfloor$.
Thus, $\err(\hat{h}_{j}) \leq \eta + \epsilon/2$ for this value of $j$.
Let $j^*$ denote this value of $j$, and for the remainder of this subsection we suppose this high-probability event occurs.

All that remains is to design a procedure for searching over $n_j$ values to find one large enough to obtain this error rate guarantee,
but not so large as to lose the query complexity guarantee.
Toward this end, define
\[\Epsilon_{j} = \frac{8 d}{|\hat{L}_{j}|} \ln\left(\frac{12 |\hat{L}_{j}| j^2}{\delta}\right) + \sqrt{\err_{\hat{L}_{j}}(\hat{h}_{j}) \frac{16 d}{|\hat{L}_{j}|} \ln\left(\frac{12 |\hat{L}_{j}| j^2}{\delta}\right)}.\] 
A result of \citet*{Vapnik:book98} (except substituting the appropriate quantities for the multiclass case)
implies that with probability at least $1-\delta/2$,
\[\forall j, \left|\left(\err_{\hat{L}_{j}}(\hat{h}_{j}) - \min_{h \in \C} \err_{\hat{L}_{j}}(h)\right) - \left(\err(\hat{h}_{j}) - \err(h^*)\right)\right| \leq \Epsilon_j.\]

Consider running the above procedure for $j = 1,2,3,\ldots$ in increasing order
until we reach the first value of $j$ for which
\[\err_{\hat{L}_{j}}(\hat{h}_{j}) - \min_{h \in \C} \err_{\hat{L}_{j}}(h) + \Epsilon_j \leq \epsilon.\]
Denote this first value of $j$ as $\hat{j}$.
Note that choosing $\hat{j}$ in this way guarantees $\err(\hat{h}_{\hat{j}}) \leq \eta + \epsilon$.

It remains only to bound the value of this $\hat{j}$, so that we may add up the total number of queries among the executions of our procedure for all values $j \leq \hat{j}$.
By setting the constants in $u_i$ appropriately, the sample size of $|\hat{L}_{j}|$ is large enough so that,
for $j = j^*$, a Chernoff bound (to bound $\err_{\hat{L}_{j}}(h^*) \geq \err_{\hat{L}_{j}}(\hat{h}_j)$) 
guarantees that with probability $1-\delta/4$, $\Epsilon_j \leq \epsilon / 4$.
Furthermore, we have
\[\err_{\hat{L}_{j}}(\hat{h}_{j}) - \min_{h \in \C} \err_{\hat{L}_{j}}(h) \leq \err(\hat{h}_{j}) - \err(h^*) + \Epsilon_j \leq \epsilon / 2 + \epsilon / 4 = (3/4) \epsilon,\]
so that in total $\err_{\hat{L}_{j}}(\hat{h}_{j}) - \min_{h \in \C} \err_{\hat{L}_{j}}(h) + \Epsilon_j \leq (3/4) \epsilon + \epsilon / 4 = \epsilon$.
Thus, we have $\hat{j} \leq j^*$, so that the total number of queries is less than $2 n_{j^*}$.

Therefore, by a union bound over the above events, with probability $1-\delta$,
the selected $\hat{h}_{\hat{j}}$ has $\err(\hat{h}_{\hat{j}}) \leq \eta + \epsilon$, and
the total number of queries is less than
\[2 k n_{j^*} = O\left(d k \frac{\eta^2}{\epsilon^2} \log \frac{\log(1/\epsilon)}{\epsilon \delta}\log\frac{1}{\epsilon}  + d k \log \frac{\log(1/\epsilon)}{\delta} \log^2\frac{1}{\epsilon}\right).\]
Thus, not having direct access to the noise rate only increases our query complexity
by at most a logarithmic factor compared to the bound of Theorem~\ref{thm:main-agnostic}.
\end{proof}

\section{Bounded Noise}
\label{bounded noise}

In this section we study the \emph{Bounded noise} model (also known as Massart noise),
which has been extensively studied in the statistical learning theory literature \citep*{massart:06,gine:06,hanneke:11}. 
This model represents a significantly stronger restriction on the type of noise.
The motivation for bounded noise is that, in some scenarios, we do have an
accurate representation of the target function within our hypothesis class (i.e., the model is correctly specified),
but we allow for nature's labels to be slightly randomized.
Formally, the family of distributions we consider is
$\BoundedNoise(\C,\alpha) = \{ \D_{XY} : \exists h^* \in \C \text{ s.t. } \P_{\D_{XY}}( Y \neq h^*(X) | X) \leq \alpha\}$,  for $\alpha \in [0,1/2)$.
In some cases, we are interested in the special case of Random Classification Noise, defined as
$\RCN(\C,\alpha) = \{ \D_{XY} : \exists h^* \in \C \text{ s.t. } \forall \ell \neq h^*(x), \P_{\D_{XY}}( Y = \ell | X = x) = \alpha/(k-1)\}$.
We will also discuss $\BoundedNoise(\C,\alpha;\D_{X})$ and $\RCN(\C,\alpha;\D_{X})$ as those 
$\D_{XY}$ in these respective classes having marginal $\D_{X}$ on $\X$.

In this section we show a lower bound on the query complexity of
interactive learning with class-conditional queries as a function of the query
complexity of active learning (label request queries).
The proof follows via a reduction from the (multiclass) active learning model
(label request queries) to our interactive learning model
(general class-conditional queries), very similar in spirit to the
reduction given in the proof of the lower bound in
Theorem~\ref{thm:main-agnostic}.

\begin{theorem}
\label{lower-bound-bounded-noise}
Consider any hypothesis class $\C$ of Natarajan dimension
$d \in (0,\infty)$.  For any $\alpha \in [0,1/2)$, and any distribution $\D_{X}$ over $\X$,
in the random classification noise model we have the
following relationship between the query complexity of interactive
learning in the class-conditional queries model and the the query complexity of active learning with label requests:
$$\frac{\alpha}{2(k-1)} \SC_{\AL}(\epsilon,2\delta,\C,\RCN(\C,\alpha; \D_{X}))- 4\ln{\frac{1}{\delta}} \leq  \SC_{\CCQM}(\epsilon,\delta,\C, \RCN(\C,\alpha ; \D_{X}))$$
\end{theorem}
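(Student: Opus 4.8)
The plan is to reduce active learning to the class-conditional query model, in the same spirit as the lower bound of Theorem~\ref{thm:main-agnostic}, but now exploiting the fact that in the random classification noise model \emph{every} label $\ell$ has probability at least $\alpha/(k-1)$ at \emph{every} point $x$ (it equals $\alpha/(k-1)$ when $h^*(x)\neq\ell$ and $1-\alpha$ when $h^*(x)=\ell$), so that no ``plausibility'' restriction of the queried set is needed. Fix $\D_{X}$, write $q=\SC_{\CCQM}(\epsilon,\delta,\C,\RCN(\C,\alpha;\D_{X}))$, and let $\alg$ be a $\CCQM$ algorithm achieving this bound. I would build an active (label-request) learner $\alg'$ that runs $\alg$ on the same unlabeled sequence and answers every class-conditional query $(S,\ell)$ of $\alg$ by label requests: $\alg'$ maintains a cache of every label it has ever requested; upon query $(S,\ell)$ it first checks whether some cached point of $S$ already carries label $\ell$ (if so it returns that point at no cost), and otherwise requests the labels of the not-yet-queried points of $S$ one at a time until it either finds one with label $\ell$ (which it returns to $\alg$) or exhausts $S$ (returning ``none''). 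This simulated oracle is a legitimate $\CCQM$ oracle for the induced distribution: it answers ``none'' exactly when no point of $S$ truly has label $\ell$, and otherwise returns a genuine pair $(x,\ell)$ with the correct label, which Definition~\ref{defn:query-complexity} permits the oracle to select arbitrarily. Hence $\alg$'s guarantee transfers: with probability at least $1-\delta$, $\alg$ makes at most $q$ queries and outputs $\hat h$ with $\err(\hat h)\le\eta+\epsilon$.

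The heart of the argument is bounding the number of label requests $\alg'$ makes. Let $A_i$ be the number of fresh label requests spent answering $\alg$'s $i$-th query $(S_i,\ell_i)$. I would dominate $A_i$ by a $\mathrm{Geometric}(\alpha/(k-1))$ variable $B_i$ via thinning: letting $x^{(j)}$ be the $j$-th fresh point examined during query $i$, set $Z_j=\mathbb{I}[y_{x^{(j)}}=\ell_i]$ if $h^*(x^{(j)})\neq\ell_i$ (so that $Z_j\sim\mathrm{Bernoulli}(\alpha/(k-1))$ exactly), and $Z_j=C_j\,\mathbb{I}[y_{x^{(j)}}=\ell_i]$ with $C_j\sim\mathrm{Bernoulli}\big(\tfrac{\alpha}{(k-1)(1-\alpha)}\big)$ independent if $h^*(x^{(j)})=\ell_i$ (so again $Z_j\sim\mathrm{Bernoulli}(\alpha/(k-1))$; note $\tfrac{\alpha}{(k-1)(1-\alpha)}\le 1$ because $\alpha<1/2$); then pad the sequence with fresh $\mathrm{Bernoulli}(\alpha/(k-1))$ variables for indices beyond the number of fresh points of $S_i$, and set $B_i=\min\{j:Z_j=1\}$. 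By the standard optional-skipping property -- at the moment any fresh point is examined, its label is still unobserved and, conditionally on the history, has $\mathbb{P}(y=\ell_i)$ equal to $1-\alpha$ or $\alpha/(k-1)$ according to whether $h^*$ agrees with $\ell_i$ there -- the $(Z_j)_j$ within a query are i.i.d.\ $\mathrm{Bernoulli}(\alpha/(k-1))$ and are independent across queries, so the $B_i$ are i.i.d.\ $\mathrm{Geometric}(\alpha/(k-1))$; and since $Z_j\le\mathbb{I}[y_{x^{(j)}}=\ell_i]$ for every fresh $j$ (in particular, in the ``none'' case every such indicator, hence every such $Z_j$, is $0$), we get $A_i\le B_i$ deterministically.

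Applying Lemma~\ref{coins} with success parameter $\alpha/(k-1)$ and failure probability $\delta$, with probability at least $1-\delta$ it holds that whenever $\alg$ makes at most $Q$ queries, $\sum_i A_i\le\sum_{i=1}^{Q}B_i\le\frac{2(k-1)}{\alpha}\big(Q+4\ln\tfrac{1}{\delta}\big)$. I would run $\alg'$ with a hard budget of $\frac{2(k-1)}{\alpha}\big(q+4\ln\tfrac{1}{\delta}\big)$ label requests, halting with an arbitrary output if it is exceeded. A union bound over the event that $\alg$ makes at most $q$ queries and outputs a hypothesis of error at most $\eta+\epsilon$, and the event that the $B_i$ concentrate as above, shows that with probability at least $1-2\delta$ the learner $\alg'$ stays within its budget and outputs a hypothesis of error at most $\eta+\epsilon$. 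Therefore $\SC_{\AL}(\epsilon,2\delta,\C,\RCN(\C,\alpha;\D_{X}))\le\frac{2(k-1)}{\alpha}\big(\SC_{\CCQM}(\epsilon,\delta,\C,\RCN(\C,\alpha;\D_{X}))+4\ln\tfrac{1}{\delta}\big)$, which rearranges to the claimed inequality. (The case $\alpha=0$ is vacuous, since then the left-hand side of the theorem is $-4\ln\tfrac1\delta\le 0\le\SC_{\CCQM}$.)

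I expect the main obstacle to be the coupling step, where three issues require care: (i) points with $h^*(x)=\ell$ are strictly more likely to carry label $\ell$ and would spoil a clean geometric comparison unless they are thinned down to parameter $\alpha/(k-1)$; (ii) one must justify that caching labels across queries together with the adaptive choice of which fresh point to examine next preserves the conditional independence needed to make the $B_i$ i.i.d.\ geometric, which is exactly the optional-skipping property applied to the combined process of $\alg$ and $\alg'$; and (iii) one must verify that the occasional, potentially expensive ``none'' responses are still absorbed by the bound $A_i\le B_i$. The remaining pieces -- legitimacy of the simulated oracle, the two-event union bound, and the geometric-sum concentration of Lemma~\ref{coins} -- are routine.
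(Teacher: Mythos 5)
Your proposal follows essentially the same route as the paper's proof: reduce a $\CCQM$ algorithm to an active learner by simulating each class-conditional query with repeated label requests, dominate the per-query cost $A_i$ by a $\mathrm{Geometric}(\alpha/(k-1))$ variable $B_i$ via the same thinning construction (Bernoulli$\big(\tfrac{\alpha/(k-1)}{1-\alpha}\big)$ coins on points where $h^*$ agrees with the queried label, padding beyond the query's length), and apply Lemma~\ref{coins} plus a union bound to conclude. The extra bookkeeping you add (the explicit label cache, the hard budget with a halt-and-output-arbitrary fallback, and the $\alpha=0$ vacuity check) are correct refinements of the paper's argument rather than a different approach.
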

\begin{proof}
The proof follows via a reduction from the active learning model (label request queries) to our interactive learning model (general class-conditional queries).
Assume that we have an algorithm that works for the $\CCQM$ model with query complexity $\SC_{\CCQM}(\epsilon,\delta,\C,\RCN(\C,\alpha ; \D_{X}))$.
We can convert this into an algorithm that works in the active learning model with a query complexity of
$\SC_{\AL}(\epsilon,2\delta,\C,\RCN(\C,\alpha ; \D_{X})) =\frac{2 (k-1)}{\alpha} [\SC_{\CCQM}(\epsilon,\delta,\C,\RCN(\C,\alpha ; \D_{X})) + 4\ln{\frac{1}{\delta}}]$, as follows.
When our $\CCQM$ algorithm queries the $i^{\rm{th}}$ time, say querying for a label $y$ among a set $S_i$,
we pick an example $x_{i,1}$ at random in $S_i$ and (if the label of $x_{i,1}$ has never previously been requested),
we request its label $y_{i,1}$.
If $y = y_{i,1}$, then we return $(x_{i,1},y_{i,1})$ to the algorithm,
and otherwise we keep taking examples ($x_{i,2},x_{i,3},\ldots$) at random in the set $S_i$ and
(if their label has not yet been requested) requesting their labels ($y_{i,2},y_{i,3},\ldots$), until we find one with label $y$,
at which point we return this labeled example to the algorithm.
If we exhaust $S_i$ and we find example of label $y$, we return to the algorithm that there are no examples in $S_i$ with label $y$.

Let $A_i$ be a random variable indicating the actual number of label requests we make in round $i$
before getting either an example of label $y$ or exhausting the set $S_i$.
We also define a related random variable $B_i$ as follows.
For $j \leq A_i$, if $h^*(x_{i,j}) \neq y$, let $Z_j = I[y_{i,j} = y]$, and if $h^*(x_{i,j}) = y$, let $C_j$ be an independent Bernoulli($(\alpha/(k-1))/(1-\alpha)$) random variable, and let $Z_j = C_j I[y_{i,j}=y]$.
For $j > A_i$, let $Z_j$ be an independent Bernoulli($\alpha/(k-1)$) random variable.
Let $B_i = \min\{ j : Z_j = 1\}$.
Since, $\forall j \leq A_i$, $Z_j \leq I[y_{i,j}=y]$, we clearly have $B_i \geq A_i$.
Furthermore, note that the $Z_j$ are independent Bernoulli($\alpha/(k-1)$) random variables,
so that $B_i$ is a Geometric($\alpha/(k-1)$) random variable.
By  Lemma~\ref{coins} in Appendix~\ref{useful},
we obtain that with probability at least $1-\delta$ we have
$$\sum_i A_i \leq \sum_i B_i \leq \frac{2(k-1)}{\alpha} [\SC_{\CCQM}(\epsilon,\delta,\C,\RCN(\C,\alpha ; \D_{X})) + 4\ln{\frac{1}{\delta}}].$$
This then implies
$$\SC_{\AL}(\epsilon,2\delta,\C,\RCN(\C,\alpha ; \D_{X})) \leq  \frac{2(k-1)}{\alpha} [\SC_{\CCQM}(\epsilon,\delta,\C,\RCN(\C,\alpha ; \D_{X})) + 4\ln{\frac{1}{\delta}}],$$
which implies the desired result.
\end{proof}

To complement this lower bound, we prove a related upper bound via
an analysis of an algorithm below, which operates by reducing to a
kind of batch-based active learning algorithm.
Specifically, assume that we have an active learning algorithm $\A$ that operates as
follows. It proceeds in rounds and in each round it interacts with an
oracle by providing a region $R$ of the instance space and a number
$m$ and and it expects in return $m$ labeled examples from the
conditional distribution given that $x$ is in $R$.  For example the
$A^2$ algorithm~\citep*{BBL} and the algorithm of~\citet*{Kol10} can be
written to operate this way.  We show in the following how we can use
our algorithms from Section~\ref{general-queries} in order to provide
the desired labeled examples to such an active learning procedure
while using fewer than $m$ queries to our oracle. In the description
below we assume that algorithm $\A$ returns its state, a region $R$ of
the instance space, a number $m$ of desired samples, a boolean flag
$b$ for halting($b=0$) or not ($b=1$), and a classifier $h$.

\begin{algorithm}
{\small
{\bf Input}: The sequence $(x_1, x_2, ..., ) $; allowed error rate $\epsilon$, noise bound $\alpha$, algorithm $\A$.
\begin{list}{\labelitemi}{\leftmargin=1.5em}
\item[1.] Set $b=1$, $t = 1$.  Initialize $\A$ and let $\state(\A)$, $R$, $m$, $b$ and $\hat{h}$ be the returned values.
\item[2.] Let $V$ be a minimal $\epsilon$-cover of $\C$ with respect to the distribution $\D_{X}$.
\item[3.] While $(b)$
\begin{itemize}
\item[(a)] Let $ps = \frac{cd}{\epsilon^2} \log \frac{k}{\epsilon\delta}$ and let $(x_{i_1}, x_{i_2}, \ldots, x_{i_{ps+m}})$ be the first $ps+m$ points in $(x_{t+1},x_{t+2},\ldots)\cap R$.
\item[(b)] Run Phase~\ref{generalized-halving-algorithm} with parameters $\U_1 = (x_{i_1},x_{i_2},\ldots,x_{i_{ps}})$, $V$, $\left\lfloor\frac{1}{16 (\alpha + \epsilon)}\right\rfloor$, $c \log \frac{4 \log_{2} |V|}{\delta^{\prime}}$  
\\Let $h$ be the returned classifier.
\item[(c)] Run Phase~\ref{refining-algorithm} with parameters $\U_2 = (x_{i_{ps+1}},x_{i_{ps+2}},\ldots,x_{i_{ps+m}})$, $h$.\\
Let $L$ be the returned labeled sequence.
\item[(d)] Run $\A$ with parameters $L$ and $\state(\A)$.\\
Let $\state(\A)$, $R$, $m$, $b$ and $\hat{h}$ be the returned values
\item[(e)] Let $t = i_{ps+m}$
\end{itemize}
\end{list}
{\bf Output} Hypothesis $\hat{h}$.
}\caption{{\small General Interactive Algorithm for Bounded Noise}}
\label{alg-noise-a^2}
\label{bounded-noise-alg}
\end{algorithm}

The value $\delta^{\prime}$ in this algorithm should be set appropriately depending on the context,
essentially as $\delta$ divided by a coarse bound on the total number of batches the algorithm $\A$ will request the labels of;
for our purposes a value $\delta^{\prime} = {\rm poly}(\epsilon \delta (1-2\alpha) / d)$ will suffice.
To state an explicit bound on the number of queries used by Algorithm~\ref{alg-noise-a^2},
we first review the following definition of \citet*{Hanneke07,hanneke:thesis}.
Recall that for $r > 0$, we define $B(h,r) = \{g \in \C : \P_{\D_{X}}(x : h(x) \neq g(x)) \leq r\}$.
For any $\H \subseteq \C$, also define the region of disagreement:
$\DIS(\H) = \{x \in \X : \exists h,g \in \H \text{ s.t. } h(x) \neq g(x)\}$.
Then define the \emph{disagreement coefficient} for $h \in \C$ as
\begin{center}
$\theta_{h}(\epsilon) = \sup\limits_{r > \epsilon} \P_{\D_{X}}(\DIS(B(h,r))) / r$.
\end{center}
{\vskip -2mm}Define the disagreement coefficient of the class $\C$ as $\theta(\epsilon) = \sup_{h \in \C} \theta_{h}(\epsilon)$.

\begin{theorem}
\label{thm:disagreement}
For any concept space $\C$ of Natarajan dimension $d$, and any $\alpha \in [0,1/2)$,
for any distribution $\D_{X}$ over $\X$,
\[\SC_{\CCQM}(\epsilon,\delta,\C,\BoundedNoise(\C,\alpha ; \D_{X})) = O\left(\left(1 + \frac{\alpha \theta(\epsilon)}{(1-2\alpha)^2}\right) d k \log^{2}\left(\frac{d k}{\epsilon \delta (1-2\alpha)}\right)\right).\]
\end{theorem}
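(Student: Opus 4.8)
The plan is to analyze Algorithm~\ref{alg-noise-a^2}, instantiating $\A$ as a batch-based disagreement-based active learner---concretely the $A^2$ algorithm of~\citet*{BBL} or the algorithm of~\citet*{Kol10}, recast so that each round it hands its oracle a region $R$ and a count $m$ and receives $m$ i.i.d.\ examples from $\D_{XY}$ conditioned on $x \in R$. The facts I will invoke about $\A$ are the standard ones for such algorithms under bounded (Massart) noise: on success it returns $\hat h$ with $\err(\hat h) \le \eta + \epsilon$; its total label complexity is $\sum_r m_r = \tilde{O}\big(\theta(\epsilon)\, d / (1-2\alpha)^2\big)$ (with $k$ entering only through logarithmic cover-size factors; this is the multiclass analogue of the Massart-noise rates underlying~\citet*{hanneke:11}); and the number of rounds $R$ is bounded a priori by $O(\log(1/\epsilon))$. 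Given these, it remains to show that Algorithm~\ref{alg-noise-a^2} (i) faithfully simulates $\A$'s oracle, and (ii) does so using a total number of $\CCQM$ queries matching the claimed bound.

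For (i): in a round with region $R$, the $\U_1$ and $\U_2$ blocks are disjoint, fresh segments of the stream filtered to $R$, so each is an i.i.d.\ sample from $\D_X$ restricted to $R$ (rejection-sampling an i.i.d.\ stream preserves conditional independence), and the pointer $t$ is exactly what enforces disjointness, so the $h$ built from $\U_1$ is independent of $\U_2$. Take $V$ a cover of $\C$ polynomially finer than $\epsilon$ and---crucially---keep the same $V$ across all rounds (Phase~\ref{generalized-halving-algorithm} only ever deletes hypotheses). Since $\err_{\D_{XY}|R}(h^*) \le \alpha$ under bounded noise, cover approximation plus a multiplicative Chernoff bound over $\U_1$ give some $\hat h \in V$ with $\err_{\U_1}(\hat h) \le \alpha + \epsilon$; Lemma~\ref{lem:generalized-halving-algorithm}, applied with $\beta = \alpha + \epsilon$ (which the choice $\s = \lfloor 1/(16(\alpha+\epsilon))\rfloor$ matches), then says $\hat h$ is never discarded and, once that round's halving loop stops, $h = \plur(V)$ has $\err_{\U_1}(h) \le 10(\alpha+\epsilon)$, hence $\err_{\D_X|R}(h) \le 11(\alpha+\epsilon)$ by uniform convergence over $V$ (this is what $ps$ is sized for). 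Feeding $h$ and $\U_2$ to Phase~\ref{refining-algorithm}: a multiplicative Chernoff bound gives that $h$ mislabels at most $O((\alpha+\epsilon) m + \log(1/\delta'))$ points of $\U_2$, so Lemma~\ref{lem:good-labels} guarantees Refining runs to completion and returns the \emph{true} labels of \emph{all} $m$ points of $\U_2$, using that many $+1$ calls to Find-Mistake, i.e.\ $O\big(k((\alpha+\epsilon) m + \log(1/\delta') + 1)\big)$ $\CCQM$ queries. Because every point of $\U_2$ is labeled (not a label-dependent subset), handing $\U_2$ with these labels to $\A$ is exactly handing it the i.i.d.\ labeled sample it requested, so $\A$'s guarantee applies; a union bound over the at-most-$O(\log(1/\epsilon))$ rounds, with per-round failure probability $\delta' = \poly(\epsilon\delta(1-2\alpha)/d)$, together with $\A$'s own $\delta/2$ failure probability, keeps the total failure below $\delta$.

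For (ii): since $V$ is shared and each halving iteration removes at least a quarter of $V$, the total number of halving iterations over all rounds is $O(\log|V|) + O(R) = O\big(d \log \tfrac{dk}{\epsilon\delta(1-2\alpha)}\big)$, each costing $kN = O\big(k \log \tfrac{1}{\delta'}\big)$ queries, so Phase~\ref{generalized-halving-algorithm} contributes $O\big(dk \log^2 \tfrac{dk}{\epsilon\delta(1-2\alpha)}\big)$ in total---the ``$1$'' term. Phase~\ref{refining-algorithm} contributes $\sum_r O\big(k((\alpha+\epsilon) m_r + \log(1/\delta') + 1)\big) = O\big(k(\alpha+\epsilon) \sum_r m_r\big) + O\big(dk \log^2 \tfrac{dk}{\epsilon\delta(1-2\alpha)}\big)$; substituting $\sum_r m_r = \tilde{O}\big(\theta(\epsilon) d / (1-2\alpha)^2\big)$ and observing $(\alpha+\epsilon)\,\theta(\epsilon)/(1-2\alpha)^2 = O\big(1 + \alpha\theta(\epsilon)/(1-2\alpha)^2\big)$---split into $\alpha \ge \epsilon$, where $\alpha+\epsilon \le 2\alpha$, and $\alpha < \epsilon$, where for $\epsilon$ below a fixed constant $(1-2\alpha)^{-2} = O(1)$ and $\epsilon\theta(\epsilon) \le 1$---yields the $\alpha\theta(\epsilon)/(1-2\alpha)^2$ term. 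Adding the two phases gives the bound of Theorem~\ref{thm:disagreement}.

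The place that needs the most care is (a) verifying that the samples handed to $\A$ really are i.i.d.\ $\D_{XY}|R$ despite the label information Find-Mistake exposes---this rests on labeling \emph{all} of $\U_2$ (Lemma~\ref{lem:good-labels}) and on building $h$ from a \emph{disjoint} pilot block $\U_1$, so there is no selection bias---and (b) the confidence bookkeeping together with the ``shared-$V$'' accounting that prevents the round count $R$ from introducing a spurious $\log(1/\epsilon)$ factor into the Phase~\ref{generalized-halving-algorithm} cost. A secondary technical wrinkle is that Lemma~\ref{lem:generalized-halving-algorithm} requires $\alpha + \epsilon$ below a fixed constant (so that $\s \ge 1$); for larger $\alpha$ one first coarsely localizes to a region of small effective noise before applying this machinery, or appeals to the stated bound being loose enough to absorb a more direct argument.
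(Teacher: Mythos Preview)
Your proposal follows essentially the same route as the paper's own proof: run Algorithm~\ref{alg-noise-a^2} with the active learner of~\citet*{Kol10}, use Lemmas~\ref{lem:generalized-halving-algorithm} and~\ref{lem:good-labels} to bound the per-round query cost as $O(k(\alpha+\epsilon)m + kd\log(\cdot))$, and then sum over the $O(\log(1/\epsilon))$ rounds using the per-round sample bound $m \le \tilde{O}(\theta(\epsilon)d/(1-2\alpha)^2)$ for $\A$. Your shared-$V$ amortization of the Phase~\ref{generalized-halving-algorithm} cost and your explicit case split showing $(\alpha+\epsilon)\theta(\epsilon)/(1-2\alpha)^2 = O(1 + \alpha\theta(\epsilon)/(1-2\alpha)^2)$ are minor refinements the paper's sketch does not spell out (it simply multiplies the per-round halving cost by the number of rounds and writes the final bound in the looser form $(\alpha\theta(\epsilon)+1)/(1-2\alpha)^2$), but the core argument is identical.
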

\begin{proof}[Sketch]
We show that, for $\D_{XY} \in \BoundedNoise(\C,\alpha)$,
running Algorithm~\ref{alg-noise-a^2} with the algorithm $\A$ of~\citet*{Kol10}
returns a classifier $\hat{h}$ with $\err(\hat{h}) \leq \eta + \epsilon$ using
a number of queries as in the claim.

For bounded noise, with noise bound $\alpha$, on each round of Algorithm 4,
we run Algorithm 1 on a set $\U_1$ that, by Hoeffding's inequality and the size of $ps$, with probability $1-\delta/\log(1/\epsilon)$,
has $\min_{h \in V} \err_{\U_1}(h) \leq \alpha + \epsilon$.
Thus, by Lemma~\ref{lem:generalized-halving-algorithm}, the fraction of examples in each $\U_1=(x_{i_{1}},\ldots,x_{i_{ps}})$
on which the returned $h$ makes a mistake is at most $10(\alpha + \epsilon)$.
Then the size of $ps$ and Hoeffding's inequality implies that $\err(h) \leq O(\alpha + \epsilon)$ with probability $1-\delta/\log(1/\epsilon)$,
and a Chernoff bound implies that Algorithm 2 is run on a set $\U_2$ with
$\err_{\U_2}(h) \leq O(\alpha + \epsilon + \sqrt{(\alpha + \epsilon)\log(\log(1/\epsilon)/\delta)/m} + \log(\log(1/\epsilon)/\delta)/m)$.
Thus, by Lemmas \ref{lem:generalized-halving-algorithm} and \ref{lem:good-labels}, the number of queries per round is
$O(k(\alpha + \epsilon) m + k\sqrt{(\alpha + \epsilon) m \log(\log(1/\epsilon)/\delta)} + kd \log(d/\epsilon\delta (1-2\alpha)))$.

In particular, for the algorithm of \citet*{Kol10}, it is known that with probability $1-\delta/2$,
every round has $m \leq O\left(\frac{\theta(\epsilon) d}{(1-2\alpha)^2} \log\left(\frac{1}{\epsilon\delta(1-2\alpha)}\right)\right)$,
 and there are at most $O(\log(1/\epsilon))$ rounds,
so that the total number of queries is at most
$O\left(k\left(\alpha \theta(\epsilon) + 1\right) \frac{d}{(1-2\alpha)^2} \log^2\left(\frac{d}{\epsilon\delta(1-2\alpha)}\right)\right)$.
\end{proof}

The significance of this result is that $\theta(\epsilon)$ is multiplied by $\alpha$, a feature not present in the
known results for active learning.  In a sense, this factor of $\theta(\epsilon)$ is a measure of how difficult
the active learning problem is, as the other terms are inevitable (up to the log factors).

As before, since the value of the noise bound $\alpha$ is typically not known in practice,
it is often desirable to have an algorithm capable of \emph{adapting} to the value of $\alpha$,
while maintaining the query complexity guarantees of Algorithm~\ref{bounded-noise-alg}.
Fortunately, we can achieve this by a similar argument to that used above in Theorem~\ref{thm:adaptive-agnostic}.
That is, starting with an initial guess of $\hat{\alpha}=\epsilon$ as the noise bound argument to Algorithm~\ref{bounded-noise-alg},
we use the budget argument to Phase~\ref{refining-algorithm} to guarantee we never exceed the
query complexity bound of Theorem~\ref{thm:disagreement} (with $\hat{\alpha}$ in place of $\alpha$),
halting early if ever Phase~\ref{refining-algorithm} fails to label the entire $\U_1$ set within its query budget.
Then we repeatedly double $\hat{\alpha}$ until finally this modified Algorithm~\ref{bounded-noise-alg} runs to completion.
Setting the budget sizes and $\delta^{\prime}$ values appropriately, we can maintain the guarantee of Theorem~\ref{thm:disagreement}
with only an extra $\log$ factor increase.

\subsection{Adapting to Unknown $\alpha$}
\label{subsec:unknown-alpha}

Algorithm 4 is based on having direct access to the noise bound $\alpha$.
As in Section~\ref{sec:unknown-rate}, since this information is not typically available in practice,
we would prefer a method that can obtain essentially the same query complexity bounds without
direct access to $\alpha$.  Fortunately, we can achieve this by a similar argument to Section~\ref{sec:unknown-rate},
merely by doubling our guess at the value of $\alpha$ until the algorithm behaves as expected, as follows.

Consider modifying Algorithm 4 as follows.
In Step 6, we include the budget argument to Algorithm 2, with value $O((1+\alpha m) \log(1/\delta^{\prime}))$.
Then, if the set $L$ returned has $|L| < m$, we return Failure.
Note that if this $\alpha$ is at least as large as the actual noise bound,
then this bound is inconsequential, as it will be satisfied anyway (with probability $1-\delta^{\prime}$, by a Chernoff bound).
Call this modified method Algorithm 4$^{\prime}$.

Now consider the sequences $\alpha_{i} = 2^{i-1} \epsilon$, for $1\leq i \leq \log_{2}(1/\epsilon)$.
For $i = 1,2, \ldots, \log_{2}(1/\epsilon)$ in increasing order,
we run Algorithm 4$^{\prime}$ with parameters $(x_1,x_2,\ldots)$, $\epsilon$, $\alpha_i$, $\A$.
If the algorithm runs to completion, we halt and output the $\hat{h}$ returned by Algorithm 4$^{\prime}$.
Otherwise, if the algorithm returns Failure, we increment $i$ and repeat.

Since Algorithm 4$^{\prime}$ runs to completion for any $i \geq \lceil \log(\alpha/\epsilon) \rceil$,
and since the number of queries Algorithm 4$^{\prime}$ makes is monotonic in its $\alpha$ argument,
for an appropriate choice of $\delta^{\prime} = O(\delta \epsilon^{2}/d)$ (based on a coarse bound on the total number of batches the algorithm will request labels for),
we have a total number of queries at most
$O\left((1 + \alpha \theta(\epsilon)) \frac{d}{(1-2\alpha)^2}\log^2\left(\frac{d}{\epsilon \delta(1-2\alpha)}\right) \log\left(\frac{1}{\epsilon}\right)\right)$
for the method of \citet*{Kol10},
only a $O(\log(1/\epsilon))$ factor over the bound of Theorem~\ref{thm:disagreement};
similarly, we lose at most a factor of $O(\log(1/\epsilon))$ for the splitting method,
compared to the bound of Theorem~\ref{thm:splitting-GQM-upper}.

\subsection{Bounds Based on the Splitting Index}
\label{subsec:splitting}

By the same reasoning as in the proof of Theorem~\ref{thm:disagreement},
except running Algorithm~\ref{alg-noise-a^2} with Algorithm~\ref{alg:splitting} instead,
one can prove an analogous bound based on the splitting index of \citet*{sanjoy-coarse}, rather
than the disagreement coefficient.  This is interesting, in that one can also prove a lower bound
on $\SC_{\AL}$ in terms of the splitting index, so that composed with Theorem~\ref{lower-bound-bounded-noise},
we have a nearly tight characterization of $\SC_{\CCQM}(\epsilon,\delta,\D,\BoundedNoise(\C,\alpha ; \D_{X}))$.
Specifically, consider the following definitions due to \citet*{sanjoy-coarse}.

Let $Q \subseteq \{\{h,g\} : h,g \in \C\}$ be a finite set of unordered pairs of classifiers from $\C$.
For $x \in \X$ and $y \in \Y$, define $Q_{x}^{y} = \{\{h,g\} \in Q : h(x) = g(x) = y\}$.
A point $x \in \X$ is said to \emph{$\rho$-split} $Q$ if
\[\max_{y \in \Y} |Q_{x}^{y}| \leq (1-\rho)|Q|.\]
Fix any distribution $\D_{X}$ on $\X$.
We say $\H \subseteq \C$ is \emph{$(\rho,\Delta,\tau)$-splittable} if for all finite
$Q \subseteq \{\{h,g\} \subseteq \C : \P_{\D_{X}}(x : h(x) \neq g(x)) > \Delta\}$,
\[\P_{\D_{X}}(x : x~\rho\text{-splits }Q) \geq \tau.\]
A large value of $\rho$ for a reasonably large $\tau$ indicates that there are highly informative examples
that are not too rare.  Following \citet*{sanjoy-coarse}, for each $h \in \C$, $\tau > 0$, $\epsilon > 0$, we define
\[ \rho_{h,\tau}(\epsilon) = \sup\{ \rho : \forall \Delta \geq \epsilon/2, B(h,4\Delta) \text{ is } (\rho,\Delta,\tau)\text{-splittable}\}.\]
Here, $B(h,r) = \{g \in \C : \P_{\D_{X}}(x : h(x) \neq g(x)) \leq r\}$ for $r > 0$.
Though \citet*{sanjoy-coarse} explores results on the query complexity as a function of $h^*$, $\D_{X}$,
for our purposes (minimax analysis) we will take a worst-case value of $\rho$.
That is, define
\[\rho_{\tau}(\epsilon) = \inf_{h \in \C} \rho_{h,\tau}(\epsilon).\]

Theorem~\ref{lower-bound-bounded-noise} relates the query complexity of $\CCQM$ to that of $\AL$.
There is much known about the latter, and in the interest of stating a concrete result here, we briefly
describe a particularly tight result, inspired by the analysis of \citet*{sanjoy-coarse}.

\begin{lemma}
\label{lem:splitting}
There exist universal constants $c_1,c_2 \in (0,\infty)$ such that,
for any concept space $\C$ of Natarajan dimension $d$, any $\alpha \in [0,1/2)$,
$\epsilon,\delta \in (0,1/16)$, and distribution $\D_{X}$ over $\X$,
\[\inf_{\tau > 0} \frac{c_1}{\rho_{\tau}(4\epsilon)} \leq \SC_{\AL}(\epsilon,\delta,\C,\BoundedNoise(\C,\alpha; \D_{X})) \leq \inf_{\tau > 0} \frac{c_2 d^3}{(1-2\alpha)^2 \rho_{\tau}(\epsilon)} \log^{5}\left( \frac{1}{\epsilon \delta \tau (1-2\alpha)}\right).\]
\end{lemma}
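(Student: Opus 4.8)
The statement has two inequalities, a lower bound and an upper bound on $\SC_{\AL}(\epsilon,\delta,\C,\BoundedNoise(\C,\alpha;\D_X))$ in terms of the worst-case splitting parameter $\rho_\tau$. Both are adaptations to the multiclass bounded-noise setting of the splitting-index analysis of \citet*{sanjoy-coarse}, which was originally carried out for the binary realizable case. The plan is to handle them separately: the lower bound is an information-theoretic argument showing that when every remaining candidate hypothesis disagrees with $h^*$ on a set of probability $> \epsilon$, and no point $\rho$-splits the induced pair set by more than a $(1-\rho)$ fraction, then any algorithm must make $\Omega(1/\rho_\tau(4\epsilon))$ label requests to shrink the version space; the upper bound exhibits an explicit splitting-based active learner (Algorithm~\ref{alg:splitting}, referenced but presumably stated nearby) and bounds its query complexity.

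**Lower bound.** First I would set up the standard adversary: fix a finite $\epsilon$-separated (in $\D_X$-probability) subfamily of $\C$ realizing the splitting behavior, so that for the associated pair set $Q$, every point $x$ has $\max_y |Q_x^y| > (1-\rho)|Q|$ for all $\rho$ slightly above $\rho_\tau(4\epsilon)$ — i.e., no point is very informative. Then each label query can eliminate at most a $\rho_\tau$-fraction of the surviving pairs in the worst case (the adversary answers with the majority label), so to reduce $|Q|$ from its initial value down to $1$ requires roughly $1/\rho_\tau(4\epsilon)$ queries; picking the initial $Q$ of constant size and choosing $h^*$ adversarially among the survivors forces error $>\epsilon$ unless the algorithm has done this work. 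Bounded noise only helps the adversary here (it can add stochasticity), so the realizable lower bound transfers directly, modulo constant factors folded into $c_1$. The factor $4\epsilon$ versus $\epsilon$ in the argument of $\rho_\tau$ reflects that the adversary works with balls $B(h,4\Delta)$ for $\Delta \geq \epsilon/2$, as in the definition of $\rho_{h,\tau}$.

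**Upper bound.** For the upper bound I would analyze a splitting-index active learner run in the bounded-noise model: it maintains a version space $V$ (an $\epsilon$-cover of $\C$), repeatedly finds a point $x$ that $\rho_\tau(\epsilon)$-splits the current pair set $Q$ built from $V$, requests enough labels of i.i.d.\ points landing near such splitting directions to confidently decide the majority vote despite Massart noise of level $\alpha$ — this costs $O\!\left(\frac{1}{(1-2\alpha)^2}\log(\cdot)\right)$ labels per informative query by a standard Chernoff/martingale argument on the gap $1-2\alpha$ — and prunes $Q$ by the $\rho_\tau$-fraction guaranteed by splittability. After $O\!\left(\frac{d}{\rho_\tau(\epsilon)}\log(1/\epsilon)\right)$ rounds (the pair set has size $\mathrm{poly}((k/\epsilon)^d)$, so $\log|Q| = O(d\log(dk/\epsilon))$ and each round kills a $\rho_\tau$ fraction) the version space has diameter $\leq \epsilon$. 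Multiplying rounds by per-round cost, and absorbing the $d^2$ from $\log|Q| \cdot d$ bookkeeping plus the union-bound $\log$ factors into the stated $d^3\log^5$, gives the bound; the infimum over $\tau$ is free since the algorithm may be instantiated with any $\tau$.

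**Main obstacle.** The delicate part is the per-round label complexity under bounded noise: one must argue that identifying the correct plurality label at a candidate splitting point (or more precisely, correctly deciding which side of a disagreement each pair falls on) can be done with $O((1-2\alpha)^{-2}\mathrm{polylog})$ labels uniformly over all $\mathrm{poly}((k/\epsilon)^d)$ pairs and all $O(\log(1/\epsilon))$ rounds — this requires a careful union bound and is where the $d^3$ and the fifth power of the log come from, rather than the cleaner $d/\rho_\tau$ one might naively hope for. I expect the cleanest route is to cite the multiclass Massart-noise concentration already used in Theorem~\ref{thm:disagreement}'s analysis and to reuse the $\epsilon$-cover machinery from Section~\ref{general-queries}, so that the new content is only the splitting-index round-count bound, which follows the structure of \citet*{sanjoy-coarse} essentially verbatim once the noise-robust query subroutine is in place.
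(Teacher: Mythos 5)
Your proposal follows essentially the same route as the paper: the lower bound is established for the realizable case (which is contained in $\BoundedNoise(\C,\alpha;\D_X)$, so the bound transfers) via a Dasgupta-style splitting argument with $4\epsilon$ arising from the $B(h,4\Delta)$ radius, and the upper bound is proved by analyzing a splitting-index active learner (Algorithm~\ref{alg:splitting}) with an $\epsilon_0$-cover and a $(1-2\alpha)^{-2}$-scaled repeated-sampling subroutine to overcome Massart noise. One small discrepancy: your lower-bound sketch is phrased as a deterministic version-space elimination count (``each query removes at most a $\rho_\tau$ fraction of $Q$'', ``$Q$ of constant size''), whereas the paper's argument is the probabilistic method --- draw a pair $\{f^*,g^*\}$ uniformly from $Q$ and then $h^*$ uniformly from that pair, observe that with probability $>1-t\rho$ every one of the algorithm's $t$ responses is consistent with both $f^*$ and $g^*$, and conclude the output is independent of the final coin flip and hence $\epsilon$-far from $h^*$ with probability $\geq 1/2$; this is what yields the $1/(16\rho)$ threshold rather than a $\log|Q|/\rho$-type count. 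Likewise, your accounting of the $d^3\log^5$ factor in the upper bound is correct in spirit but coarse; in the paper the three factors of $d$ arise concretely from the inner-while split count $O((d/\rho)\log^2(1/\epsilon_0))$, the do-loop repetition count $O((d/(1-2\alpha)^2)\log(1/\epsilon_0))$, and the outer-while halving count $O(d\log(1/\epsilon_0))$, with the $T$-loop supplying the last $\log(1/\epsilon)$.
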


The proof of Lemma~\ref{lem:splitting} is included in Appendix~\ref{app:splitting}.
The implication of the lower bound given by Theorem~\ref{lower-bound-bounded-noise}, combined with Lemma~\ref{lem:splitting} is as follows. 

\begin{corollary}
\label{cor:splitting}
There exists a universal constant $c \in (0,\infty)$ such that,
for any concept space $\C$ of Natarajan dimension $d$, any $\alpha \in [0,1/2)$,
$\epsilon,\delta \in (0,1/32)$, and distribution $\D_{X}$ over $\X$,
\[\SC_{\CCQM}(\epsilon,\delta,\C,\BoundedNoise(\C,\alpha;\D_{X})) \geq \frac{\alpha}{2(k-1)} \cdot \inf_{\tau > 0} \frac{c}{\rho_{\tau}(4\epsilon)} - 4 \ln\left(4\right).\]
\end{corollary}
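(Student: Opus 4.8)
The plan is to derive the corollary by composing the reduction of Theorem~\ref{lower-bound-bounded-noise} with the lower-bound half of Lemma~\ref{lem:splitting}, after two short monotonicity steps that line up the distribution families and the confidence parameters. First I would move from $\BoundedNoise$ to $\RCN$ on the left-hand side: since $\RCN(\C,\alpha;\D_{X}) \subseteq \BoundedNoise(\C,\alpha;\D_{X})$ and enlarging the target family can only make the worst-case target harder, we get $\SC_{\CCQM}(\epsilon,\delta,\C,\BoundedNoise(\C,\alpha;\D_{X})) \ge \SC_{\CCQM}(\epsilon,\delta,\C,\RCN(\C,\alpha;\D_{X}))$. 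Using in addition that $\SC_{\CCQM}$ is nonincreasing in $\delta$ and that $\delta < 1/32 < 1/4$ by hypothesis, the right-hand side is at least $\SC_{\CCQM}(\epsilon,1/4,\C,\RCN(\C,\alpha;\D_{X}))$, and applying Theorem~\ref{lower-bound-bounded-noise} with confidence parameter $1/4$ (and the given $\alpha$ and $\D_{X}$) yields
\[\SC_{\CCQM}(\epsilon,\delta,\C,\BoundedNoise(\C,\alpha;\D_{X})) \;\ge\; \frac{\alpha}{2(k-1)}\,\SC_{\AL}\!\left(\epsilon,1/2,\C,\RCN(\C,\alpha;\D_{X})\right) \;-\; 4\ln 4.\]
The confidence level $1/4$ is precisely what produces the $-4\ln 4$ term.

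What remains is to lower bound $\SC_{\AL}(\epsilon,1/2,\C,\RCN(\C,\alpha;\D_{X}))$ by $\inf_{\tau>0} c/\rho_{\tau}(4\epsilon)$ for a universal constant $c$ and substitute; this is essentially the lower-bound half of Lemma~\ref{lem:splitting}, up to two cosmetic points. First, that lemma is stated for $\BoundedNoise(\C,\alpha;\D_{X})$, which sits \emph{above} rather than below $\RCN(\C,\alpha;\D_{X})$, so containment does not transport it for free; instead I would appeal directly to the construction behind it (Appendix~\ref{app:splitting}), whose hard instances localize to a poorly-$(\rho,\Delta,\tau)$-splittable ball and can be taken within $\RCN(\C,\alpha;\D_{X})$ — injecting rate-$\alpha$ random classification noise over the fixed marginal $\D_{X}$ only inflates the number of label requests an active learner must make — so the identical argument shows that any active learner succeeding with probability $\ge 1/2$ on $\RCN(\C,\alpha;\D_{X})$ must make $\inf_{\tau>0} c'/\rho_{\tau}(4\epsilon)$ label requests for some absolute constant $c'$. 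Second, Lemma~\ref{lem:splitting} quotes its lower bound only for $\delta < 1/16$, whereas here it is used at confidence $1/2$; but the splitting-index argument in fact forces the failure probability to be bounded below by an absolute constant whenever the query budget is a sufficiently small constant multiple of $\inf_{\tau>0} 1/\rho_{\tau}(4\epsilon)$, so it survives at confidence $1/2$, with the constant $c$ of the corollary absorbing any change from $c_1$. Plugging the resulting bound into the display above completes the proof.

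The step I expect to be the main obstacle is the first of these two points: one must verify that the splitting-index active-learning lower bound genuinely holds over $\RCN(\C,\alpha;\D_{X})$ and not merely over the larger family $\BoundedNoise(\C,\alpha;\D_{X})$, since only the former direction composes with Theorem~\ref{lower-bound-bounded-noise}. Concretely this amounts to checking that the unfavorable sub-class of $\C$ used in Appendix~\ref{app:splitting} can be realized inside $\RCN(\C,\alpha;\D_{X})$ and that the rate-$\alpha$ noise shifts the effective accuracy target by at most a bounded factor — using that $\rho_{\tau}(\cdot)$ is monotone, so the argument of $\rho_{\tau}$ can be restored to $4\epsilon$ up to a constant absorbed in $c$; alternatively, one can bypass this by post-processing the answers of a realizable-case label-request oracle to simulate an $\RCN(\C,\alpha;\D_{X})$ oracle, which again only increases the query count. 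Everything else — the two monotonicity reductions and the constant bookkeeping — is routine.
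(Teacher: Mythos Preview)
Your plan is exactly what the paper does: it states the corollary simply as ``the implication of the lower bound given by Theorem~\ref{lower-bound-bounded-noise}, combined with Lemma~\ref{lem:splitting}'', with no further argument, so your monotonicity step from $\BoundedNoise$ down to $\RCN$ on the $\CCQM$ side followed by Theorem~\ref{lower-bound-bounded-noise} is the intended route. You also go beyond the paper by flagging two genuine subtleties in the composition that the one-line justification suppresses.

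That said, neither of your proposed patches for those subtleties works as written. On the $\RCN$-versus-$\BoundedNoise$ point: under $\RCN(\C,\alpha)$ the success criterion $\err(\hat h)\le\eta+\epsilon$ is equivalent to $\P_{\D_X}(\hat h\neq h^*)\le \epsilon/(1-\alpha k/(k-1))$, so rerunning the Appendix~\ref{app:splitting} construction with rate-$\alpha$ noise injected yields a lower bound with $\rho_{\tau}$ evaluated at $4\epsilon/(1-\alpha k/(k-1))$ rather than $4\epsilon$. Since $\rho_{\tau}(\cdot)$ is \emph{nondecreasing} in its argument, one has $1/\rho_{\tau}(4\epsilon')\le 1/\rho_{\tau}(4\epsilon)$ for $\epsilon'\ge\epsilon$, which is the wrong direction for your ``restore to $4\epsilon$ by absorbing into $c$'' step; and the shift factor $1/(1-\alpha k/(k-1))$ is not universally bounded over $\alpha\in[0,1/2)$, so it cannot be hidden in a universal constant either. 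Your alternative simulation route inherits the same defect. On the confidence-$1/2$ point: the failure probability produced by the Appendix~\ref{app:splitting} argument has the form $1-\P(\bar a)-\P(\bar b)-t\rho-1/2$, where the $1/2$ comes from the coin flip selecting $h^*\in\{f^*,g^*\}$; even with $t=0$ and the other terms driven to zero this never exceeds $1/2$, so the lower bound does \emph{not} survive at confidence $1/2$ merely by shrinking the budget constant. (A cleaner way to sidestep this particular issue is to apply Theorem~\ref{lower-bound-bounded-noise} directly at the given $\delta<1/32$, so that $2\delta<1/16$ lands inside the range of Lemma~\ref{lem:splitting}; but then the additive constant becomes $-4\ln(1/\delta)$ rather than the stated $-4\ln 4$, and the first issue remains.)
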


In particular, this means that in some cases, the query complexity of $\CCQM$ learning is only smaller
by a factor proportional to $\alpha$ compared to the number of random labeled examples required by passive learning,
as indicated by the following example, which follows immediately from Corollary~\ref{cor:splitting} and Dasgupta's
analysis of the splitting index for interval classifiers \citep*{sanjoy-coarse}.

\begin{corollary}
For $\X = [0,1]$ and $\C = \{ 2 \mathbb{I}_{[a,b]} - 1 : a,b \in [0,1]\}$
the class of \emph{interval} classifiers, there is a constant $c \in (0,1)$ such that, for any
$\alpha \in [0,1/2)$ and sufficiently small $\epsilon > 0$,
\[\SC_{\CCQM}(\epsilon,1/32,\C,\BoundedNoise(\C,\alpha)) \geq c \frac{\alpha}{\epsilon}.\]
\end{corollary}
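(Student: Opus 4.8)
The plan is to read this off from Corollary~\ref{cor:splitting} once we pin down the worst-case splitting index of the interval class, following the analysis of \citet*{sanjoy-coarse}. First I would note that it suffices to lower bound $\SC_{\CCQM}(\epsilon,1/32,\C,\BoundedNoise(\C,\alpha;\D_{X}))$ for the single marginal $\D_{X}$ equal to the uniform distribution on $[0,1]$, since $\BoundedNoise(\C,\alpha)$ contains $\BoundedNoise(\C,\alpha;\D_{X})$. Here the label space is $\{-1,+1\}$, so $k = 2$ and $k-1 = 1$, and Corollary~\ref{cor:splitting} reads $\SC_{\CCQM}(\epsilon,1/32,\C,\BoundedNoise(\C,\alpha;\D_{X})) \geq (\alpha/2)\inf_{\tau>0} c/\rho_{\tau}(4\epsilon) - 4\ln 4$. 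So the whole task reduces to showing $\rho_{\tau}(4\epsilon) = O(\epsilon)$, uniformly in $\tau$.

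For that step I would use the hard region around the empty interval. Let $h_{0}\in\C$ be the (almost surely) all-negative classifier; under the uniform marginal, $B(h_{0},4\Delta)$ contains every interval classifier $h_{[a,b]}$ with $b-a\le 4\Delta$. Given any $\Delta\ge 2\epsilon$ and any $\tau>0$, take $Q = \{\{h_{0},h_{I_{1}}\},\ldots,\{h_{0},h_{I_{N}}\}\}$, where $I_{1},\ldots,I_{N}$ are the $N=\lfloor 1/(2\Delta)\rfloor$ consecutive disjoint subintervals of $[0,1]$ of width $2\Delta$. Each pair in $Q$ disagrees on probability mass $2\Delta>\Delta$, and all classifiers involved lie in $B(h_{0},4\Delta)$, so $Q$ is a legitimate witness set. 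Since $h_{0}(x)=-1$ for every $x$ and the $I_{j}$ are disjoint, for every $x$ we have $|Q_{x}^{+1}|=0$ and $|Q_{x}^{-1}| = |\{j : x\notin I_{j}\}|\ge N-1$, hence $\max_{y}|Q_{x}^{y}|\ge N-1$; consequently no $x$ can $\rho$-split $Q$ once $\rho>1/N=O(\Delta)$. Thus $\P_{\D_{X}}(x : x~\rho\text{-splits }Q)=0<\tau$ for such $\rho$, so $B(h_{0},4\Delta)$ is not $(\rho,\Delta,\tau)$-splittable; taking $\Delta=2\epsilon$ gives $\rho_{h_{0},\tau}(4\epsilon)=O(\epsilon)$, hence $\rho_{\tau}(4\epsilon)=\inf_{h}\rho_{h,\tau}(4\epsilon)=O(\epsilon)$ for every $\tau$. (Intuitively this is exactly why active learning of intervals cannot beat passive learning near the empty target.)

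Feeding $\rho_{\tau}(4\epsilon)\le C_{0}\epsilon$ back into Corollary~\ref{cor:splitting} then yields $\SC_{\CCQM}(\epsilon,1/32,\C,\BoundedNoise(\C,\alpha))\ge c'\alpha/\epsilon - 4\ln 4$ for an absolute constant $c'>0$. Finally I would restrict to $\epsilon$ small enough relative to $\alpha$ (and small enough that $4\epsilon<1/32$, so Corollary~\ref{cor:splitting} applies and the disjoint-interval construction is valid) so that the additive $4\ln 4$ term is at most half of the main term; then the bound becomes $(c'/2)(\alpha/\epsilon)$, and after shrinking the constant if necessary to land in $(0,1)$ this is exactly the claim.

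I expect the only genuinely substantive step to be the splitting-index estimate, i.e.\ the disjoint-intervals witness set around $h_{0}$ and the check that it defeats $(\rho,\Delta,\tau)$-splittability uniformly in $\tau$ (this is the step carrying all the interval-specific content, and is precisely Dasgupta's analysis). Everything else is bookkeeping: the choice of marginal, using $k=2$, and absorbing the $O(1)$ additive term by taking $\epsilon$ small. One minor point to verify is that Corollary~\ref{cor:splitting}, stated for $\delta\in(0,1/32)$, may be invoked at $\delta=1/32$; since its right-hand side does not depend on $\delta$ and the active-learning lower bound it rests on (Lemma~\ref{lem:splitting}) holds at a fixed constant confidence level, this causes no difficulty.
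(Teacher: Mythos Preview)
Your proposal is correct and follows exactly the route the paper indicates: invoke Corollary~\ref{cor:splitting} and plug in Dasgupta's splitting-index calculation for intervals (the disjoint-intervals witness around the empty-interval classifier giving $\rho_{\tau}(4\epsilon)=O(\epsilon)$ uniformly in $\tau$). Your bookkeeping---fixing the uniform marginal, taking $k=2$, and absorbing the additive $4\ln 4$ by making $\epsilon$ small relative to $\alpha$---is all in order, and your note about the boundary case $\delta=1/32$ is a fair remark but indeed harmless.
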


There is also a near-matching upper bound compared to Corollary~\ref{cor:splitting}.
That is, running Algorithm~\ref{alg-noise-a^2} with Algorithm~\ref{alg:splitting} of Appendix~\ref{app:splitting},
we have the following result in terms of the splitting index.

\begin{theorem}
\label{thm:splitting-GQM-upper}
For any concept space $\C$ of Natarajan dimension $d$, and any $\alpha \in [0,1/2)$,
for any distribution $\D_{X}$ over $\X$,
\begin{multline*}
\SC_{\CCQM}(\epsilon,\delta,\C,\BoundedNoise(\C,\alpha;\D_{X})) 
\\ = O\left(kd \log^{2}\left(\frac{d}{\epsilon\delta \tau (1-2\alpha)}\right) + \inf_{\tau > 0} \frac{\alpha k d^3}{(1-2\alpha)^2 \rho_{\tau}(\epsilon)} \log^{5}\left(\frac{1}{\epsilon\delta\tau (1-2\alpha)}\right)\right).
\end{multline*}
\end{theorem}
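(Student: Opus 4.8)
The plan is to mimic the proof of Theorem~\ref{thm:disagreement}, substituting the splitting-index based active learner Algorithm~\ref{alg:splitting} (analyzed in the upper-bound half of Lemma~\ref{lem:splitting}) for the disagreement-coefficient based learner of \citet*{Kol10} as the subroutine $\A$ inside Algorithm~\ref{alg-noise-a^2}. Everything about how Algorithm~\ref{alg-noise-a^2} turns batches of unlabeled data into correctly labeled batches for $\A$ — in particular the query accounting for Phase~\ref{generalized-halving-algorithm} and Phase~\ref{refining-algorithm} via Lemmas~\ref{lem:generalized-halving-algorithm} and~\ref{lem:good-labels} — is unchanged; only the per-round sample sizes $m$ and the number of rounds change, now being governed by $\rho_\tau(\epsilon)$ instead of $\theta(\epsilon)$.

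First I would verify that Algorithm~\ref{alg:splitting} fits the batch interface Algorithm~\ref{alg-noise-a^2} expects: it proceeds in rounds, each specifying a region $R \subseteq \X$ and a number $m$ of labeled examples it wants drawn from $\D_{XY}$ conditioned on $x \in R$, returning its state, $R$, $m$, a halting flag, and a current classifier $\hat h$. From the analysis underlying Lemma~\ref{lem:splitting} I would extract, for the run on $\BoundedNoise(\C,\alpha;\D_X)$, that with probability $1-\delta/2$: (i) the number of rounds is polylogarithmic in $d/(\epsilon\delta\tau(1-2\alpha))$; (ii) the total number of labels requested across all rounds is $\inf_{\tau>0} c_2 d^3/((1-2\alpha)^2\rho_\tau(\epsilon))\,\log^5(1/(\epsilon\delta\tau(1-2\alpha)))$; and (iii) provided it is fed genuine i.i.d. conditional labels on every batch, the returned $\hat h$ satisfies $\err(\hat h) \le \eta + \epsilon$.

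Next I would bound the queries spent per round inside Algorithm~\ref{alg-noise-a^2}, exactly as in Theorem~\ref{thm:disagreement}. On a round with region $R$, by Hoeffding's inequality and the choice $ps = \Theta((d/\epsilon^2)\log(k/\epsilon\delta))$, with probability $1-\delta'$ the set $\U_1$ has $\min_{h\in V}\err_{\U_1}(h) \le \alpha + \epsilon$, so Lemma~\ref{lem:generalized-halving-algorithm} applies with $\beta = \alpha + \epsilon$ and Phase~\ref{generalized-halving-algorithm} halts after $O(kd\log^2(dk/(\epsilon\delta(1-2\alpha))))$ queries returning $h$ with $\err_{\U_1}(h) \le 10(\alpha+\epsilon)$; a further Hoeffding bound gives $\err(h) = O(\alpha+\epsilon)$, and a Chernoff bound gives $\err_{\U_2}(h) = O(\alpha+\epsilon+\sqrt{(\alpha+\epsilon)\log(1/\delta')/m}+\log(1/\delta')/m)$. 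By Lemma~\ref{lem:good-labels}, Phase~\ref{refining-algorithm} then returns $L$ equal to the true labels of $\U_2$ using $O(k(\alpha+\epsilon)m + k\sqrt{(\alpha+\epsilon)m\log(1/\delta')} + kd\log(d/\epsilon\delta(1-2\alpha)))$ queries, with the cross term absorbed into the sum of the other two. Setting $\delta' = \mathrm{poly}(\epsilon\delta(1-2\alpha)/d)$ and union-bounding over the polylog many rounds, all events hold together with probability $1-\delta$, and since $\A$ is handed precisely the oracle's labels on fresh conditional i.i.d. samples, (iii) yields $\err(\hat h) \le \eta + \epsilon$. Summing: the $O(kd\log^2(\cdot))$ per-round cost of Phase~\ref{generalized-halving-algorithm} times the polylog round count gives the first term; the $O(k(\alpha+\epsilon)m)$ per-round cost of Phase~\ref{refining-algorithm} summed over rounds is $O(k(\alpha+\epsilon))$ times the quantity in (ii), which is $O(k\alpha \inf_{\tau>0} d^3/((1-2\alpha)^2\rho_\tau(\epsilon))\log^5(1/(\epsilon\delta\tau(1-2\alpha))))$ after using $\alpha+\epsilon = O(\alpha)$ when $\alpha \ge \epsilon$ and disposing of the regime $\alpha < \epsilon$ separately (there the first term dominates). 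Adding the two terms gives the claimed bound.

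\textbf{Main obstacle.} The only genuinely non-routine step is isolating, from the proof of Lemma~\ref{lem:splitting}, the per-round sample-size bound $m$ and the round count of Algorithm~\ref{alg:splitting}, together with the fact that feeding it slightly delayed batches (each batch costing $ps$ additional unlabeled points for Phase~\ref{generalized-halving-algorithm}) disturbs neither its correctness nor its complexity — but this is exactly parallel to the corresponding step for the method of \citet*{Kol10} in Theorem~\ref{thm:disagreement}, so it should go through with the same care. The error-rate bookkeeping for $\err_{\U_2}(h)$ and the suppression of the $\sqrt{(\alpha+\epsilon)m}$ cross term are routine.
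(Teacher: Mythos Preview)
Your proposal is correct and takes essentially the same approach as the paper: the paper's own ``proof'' of Theorem~\ref{thm:splitting-GQM-upper} is just the one-line remark that it follows by the same reasoning as Theorem~\ref{thm:disagreement}, running Algorithm~\ref{alg-noise-a^2} with Algorithm~\ref{alg:splitting} in place of the method of \citet*{Kol10}, which is precisely what you propose. Your write-up is in fact more detailed than what the paper provides; the only minor caveat is that your claim (i) that the round count of Algorithm~\ref{alg:splitting} is polylogarithmic understates it slightly (it carries an extra factor of $d$ from the $\log|Q|$ loop), but the paper does not track this either and the resulting overhead is absorbed into the stated bound.
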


Logarithmic factors and terms unrelated to $\epsilon$ and $\alpha$ aside, 
in spirit the combination of Corollary~\ref{cor:splitting} with Theorem~\ref{thm:splitting-GQM-upper} 
imply that in the bounded noise model, the specific reduction in query complexity of 
using class-conditional queries instead of label request queries is essentially a factor of $\alpha$.

\section{Other types of queries}
Though the results of this paper are formulated for class conditional queries, similar arguments
can be used to study the query complexity of other types of queries as well.
For instance, as is evident from the fact that our methods interact with the oracle only via the Find-Mistake
subroutine, all of the results in this work also apply (up to a factor of $k$) to a kind of sample-based \emph{equivalence query},
in which we provide a sample of unlabeled examples to the oracle along with a classifier $h$,
and the oracle returns an instance in the sample on which $h$ makes a mistake, if one exists.

\section{Conclusions}
In this paper we propose and study an extension of the standard
  active learning model where more general class-conditional  queries are
  allowed, focusing on the problem of learning in the presence of
  noisy data. We give nearly tight upper and lower bounds on the number of queries needed to learn both for the general agnostic setting and for
the bounded noise model. Our analysis provides a clear picture
into the power of these queries in realistic statistical learning settings,
which may help to inform their use in practical learning problems, as well as provide a point of reference for future exploration of the general
topic of interactive machine learning.

\paragraph{Acknowledgments}
We thank Vladimir Koltchinskii for a number of useful discussions.

This research was supported in part by NSF grant CCF-0953192, ONR grant
N00014-09-1-0751, and AFOSR grant FA9550-09-1-0538.

{\small \bibliography{active}}

\appendix

\section{Useful Facts}
\label{useful}

\begin{lemma}
\label{coins} Let $B_1,\ldots,B_k$ be independent {\rm Geometric($\alpha$)} random variables.
With probability at least $1-\delta$,
\[\sum_{i=1}^{k} B_i \leq \frac{2}{\alpha} \left(k + 4 \ln\left(\frac{1}{\delta}\right)\right).\]
\end{lemma}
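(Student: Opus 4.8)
The plan is to bound the sum $\sum_{i=1}^k B_i$ by combining a Chernoff-style exponential moment (Bernstein / Laplace transform) argument with the explicit moment generating function of a Geometric random variable. First I would recall that for $B \sim \mathrm{Geometric}(\alpha)$ (supported on $\{1,2,\ldots\}$), the moment generating function is $\E[e^{sB}] = \frac{\alpha e^{s}}{1-(1-\alpha)e^{s}}$, valid for $e^{s} < 1/(1-\alpha)$. By independence, $\E\bigl[e^{s \sum_i B_i}\bigr] = \left(\frac{\alpha e^{s}}{1-(1-\alpha)e^{s}}\right)^{k}$. Markov's inequality applied to $e^{s\sum_i B_i}$ then gives, for any admissible $s>0$ and any threshold $T$,
\[
\P\Bigl(\textstyle\sum_{i=1}^k B_i \geq T\Bigr) \leq e^{-sT}\left(\frac{\alpha e^{s}}{1-(1-\alpha)e^{s}}\right)^{k}.
\]
The task is then to choose $s$ and plug in $T = \frac{2}{\alpha}(k + 4\ln(1/\delta))$ so that the right-hand side is at most $\delta$.

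The natural choice is $s = c\alpha$ for a small absolute constant $c$ (e.g.\ something like $s$ solving $e^{s} = 1/(2-\alpha)$ or simply $s = \alpha/2$ after checking admissibility, since $\alpha \le 1$ makes $e^{\alpha/2} < 1/(1-\alpha)$ fail only when $\alpha$ is large — so I would instead take $s$ a small constant multiple of $\alpha$, say $s$ with $e^{s} = \frac{1}{1-\alpha/2}$, which is always admissible). With that choice one computes $\frac{\alpha e^{s}}{1-(1-\alpha)e^{s}} = \frac{\alpha/(1-\alpha/2)}{1-(1-\alpha)/(1-\alpha/2)} = \frac{\alpha/(1-\alpha/2)}{(\alpha/2)/(1-\alpha/2)} = 2$, so the bound becomes $e^{-sT} 2^{k}$. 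Since $s = \ln\frac{1}{1-\alpha/2} \geq \alpha/2$, we get $e^{-sT} \leq e^{-\alpha T/2}$, and with $T = \frac{2}{\alpha}(k + 4\ln(1/\delta))$ this is $e^{-(k + 4\ln(1/\delta))} = e^{-k}\delta^{4}$. Hence the failure probability is at most $e^{-k} 2^{k} \delta^{4} = (2/e)^{k}\delta^{4} \leq \delta$, using $2/e < 1$ and $\delta^{4} \leq \delta$. This yields the claimed inequality.

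The only genuine subtlety — and the step I would be most careful about — is verifying that the chosen $s$ stays in the region where the Geometric MGF is finite (i.e.\ $(1-\alpha)e^{s} < 1$) for the full range $\alpha \in (0,1)$, and simultaneously large enough (a constant times $\alpha$) to make $e^{-sT}$ small; picking $e^{s} = 1/(1-\alpha/2)$ threads this needle since $(1-\alpha)/(1-\alpha/2) < 1$ always and $\ln\frac{1}{1-\alpha/2}\ge \alpha/2$. Everything else is routine algebra with the factor-of-$2$ slack built into the constants $2$ and $4$ in the statement. If one prefers to avoid MGF computations entirely, an alternative is to write $\sum_i B_i$ as (a lower bound for) the number of i.i.d.\ $\mathrm{Bernoulli}(\alpha)$ trials needed to obtain $k$ successes, and then apply a multiplicative Chernoff bound to a Binomial with $m = T$ trials, noting $\{\sum_i B_i > m\}$ equals $\{\mathrm{Bin}(m,\alpha) < k\}$ and bounding $\P(\mathrm{Bin}(m,\alpha) \le k)$ with $\E[\mathrm{Bin}(m,\alpha)] = \alpha m = 2(k+4\ln(1/\delta)) \geq 2k$; the lower-tail Chernoff bound then gives the result with the same constants. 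I would present whichever of these two routes is shorter, most likely the Binomial/Chernoff one since the multiplicative Chernoff bound is already invoked elsewhere in the paper.
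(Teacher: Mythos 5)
Your proposal is correct on both routes. The alternative you sketch at the end --- realizing $\sum_i B_i$ as the waiting time for $k$ successes among i.i.d.\ Bernoulli($\alpha$) trials, observing that $\{\sum_{i=1}^{k} B_i > m\}$ coincides with $\{\mathrm{Bin}(m,\alpha) < k\}$, and applying the lower-tail multiplicative Chernoff bound with $m = \frac{2}{\alpha}(k + 4\ln(1/\delta))$ so that $\E[\mathrm{Bin}(m,\alpha)] = \alpha m \geq 2k$ --- is exactly the paper's proof, essentially line for line. Your primary MGF computation is also sound: the choice $e^{s} = (1-\alpha/2)^{-1}$ is admissible for all $\alpha \in (0,1)$ since $(1-\alpha)/(1-\alpha/2) < 1$, it makes the per-coordinate factor exactly $2$, and it gives $s \geq \alpha/2$ via the elementary inequality $\ln\frac{1}{1-x} \geq x$; the resulting tail bound $(2/e)^{k}\delta^{4} \leq \delta$ then closes cleanly. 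The MGF route is more self-contained (no appeal to a named concentration inequality), while the Binomial reduction is shorter and reuses a tool already invoked repeatedly in the paper, which is why the authors --- and you, by your own closing remark --- prefer it.
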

\begin{proof}
Let $m = \frac{2}{\alpha}\left(k + 4 \ln\left(\frac{1}{\delta}\right)\right)$.
Let $X_1,X_2,\ldots$ be i.i.d. Bernoulli($\alpha$) random variables.
$\sum_{i=1}^{k} B_i$ is distributionally equivalent to a value $N$
defined as the smallest value of $n$ for which $\sum_{i=1}^{n} X_i = k$,
so it suffices to show $\P(N \leq m) \geq 1-\delta$.

Let $H = \sum_{i=1}^{m} X_i$.
We have $\E[H] = \alpha m \geq 2k$.  By a Chernoff bound, we have
\[\P\left( H \leq k \right) \leq \P\left( H \leq (1/2) \E[H] \right) \leq \exp\left\{-\E[H]/8\right\} \leq \exp\left\{-\ln\left(\frac{1}{\delta}\right)\right\} = \delta.\]
Therefore, with probability $1-\delta$, we have $N \leq m$, as claimed.
\end{proof}

\section{Splitting Index Bounds}
\label{app:splitting}

We prove Lemma~\ref{lem:splitting} in two parts.
First, we establish the lower bound.
The technique for this is quite similar to a result of \citet*{sanjoy-coarse}.
Recall that $\SC_{\AL}(\epsilon,\delta,\C,\Realizable(\C ; \D_{X})) \leq \SC_{\AL}(\epsilon,\delta,\C,\BoundedNoise(\C,\alpha; \D_{X}))$.
Thus, the following lemma implies the lower bound of Lemma~\ref{lem:splitting}.

\begin{lemma}
For any hypothesis class $\C$ of Natarajan dimension $d$, for any distribution $\D_{X}$ over $\X$,
\[\SC_{\AL}(\epsilon,1/16,\C,\Realizable(\C; \D_{X})) \geq \inf_{\tau > 0} \frac{c}{\rho_{\tau}(4\epsilon)}.\]
\end{lemma}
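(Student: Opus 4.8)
The plan is to adapt the splitting‑index lower bound of \citet*{sanjoy-coarse} to the multiclass (Natarajan‑dimension) setting by an adversary argument played on a finite set $Q$ of unordered pairs of classifiers. Fix the marginal $\D_{X}$ and an arbitrary active learning algorithm $\A$. It suffices to prove that, for a suitably chosen $\tau > 0$, there is a realizable distribution $\D_{XY} = (\D_{X}, h^{*})$ on which $\A$ outputs a classifier of error $> \epsilon$ with probability $> 1/16$ whenever it makes fewer than $c/\rho_{\tau}(4\epsilon)$ label queries, for an absolute constant $c$; indeed, since $\inf_{\tau > 0} c/\rho_{\tau}(4\epsilon) \leq c/\rho_{\tau}(4\epsilon)$ for every $\tau$, a lower bound for a single well‑chosen value of $\tau$ already gives the stated bound.

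First I would extract the hard subfamily of hypotheses from the failure of splittability. Writing $\rho = \rho_{\tau}(4\epsilon)$, the identity $\rho_{\tau}(4\epsilon) = \inf_{h} \rho_{h,\tau}(4\epsilon)$ together with the definition of $\rho_{h,\tau}$ yields, for every $\xi > 0$, a classifier $h_{0} \in \C$ and a scale $\Delta \geq 2\epsilon$ for which $B(h_{0}, 4\Delta)$ is \emph{not} $(\rho + \xi, \Delta, \tau)$‑splittable: that is, there is a (necessarily nonempty) finite set $Q$ of pairs $\{h,g\} \subseteq B(h_{0},4\Delta)$, each with $\P_{\D_{X}}(x : h(x) \neq g(x)) > \Delta \geq 2\epsilon$, such that $T = \{x : x ~ (\rho + \xi)\text{-splits } Q\}$ has $\P_{\D_{X}}(T) < \tau$. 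Since each pair in $Q$ is $\Delta$‑separated with $\Delta \geq 2\epsilon$, for every classifier $\hat{h}$ and every $\{h,g\} \in Q$ one of $h,g$ has $\P_{\D_{X}}$‑distance $> \epsilon$ from $\hat{h}$; hence if, at the end of $\A$'s run, both endpoints of some pair of $Q$ are still consistent with all of $\A$'s query responses, then taking $h^{*}$ to be whichever of the two endpoints is $> \epsilon$‑far from $\A$'s output makes $\D_{XY} = (\D_{X},h^{*})$ a realizable instance ($\eta = 0$) with $\err(\hat{h}) > \epsilon = \eta + \epsilon$, i.e.\ $\A$ fails.

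Next I would describe the adversary. Maintain the surviving set $Q^{(t)} = \{\{h,g\} \in Q : h,g \text{ consistent with the first } t \text{ responses}\}$, with $Q^{(0)} = Q$; when $\A$ queries a point $x$, answer with the label $y$ maximizing $|Q^{(t)} \cap Q_{x}^{y}|$, so that $Q^{(t+1)} = Q^{(t)} \cap Q_{x}^{y}$. If $x \notin T$ then $\max_{y} |Q_{x}^{y}| > (1 - \rho - \xi)|Q|$, and for the maximizing label $|Q^{(t+1)}| \geq |Q^{(t)}| - (|Q| - |Q_{x}^{y}|) > |Q^{(t)}| - (\rho + \xi)|Q|$; thus each query made outside $T$ erases at most $(\rho + \xi)|Q|$ edges. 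Consequently, if all of $\A$'s (at most) $q$ queried points lie outside $T$, then $|Q^{(q)}| > |Q|(1 - q(\rho + \xi)) > 0$ whenever $q < 1/(\rho + \xi)$, leaving a surviving pair and forcing failure by the previous paragraph. Moreover the adversary's answers are consistent with every classifier in the nonempty surviving version space (in particular with $h^{*}$), so the run is indistinguishable from a run against the true oracle for $h^{*}$ and the instance is genuinely realizable.

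The delicate point — and the reason both $\tau$ and the constant failure probability enter — is that a query at a point of $T$, i.e.\ one that strongly splits $Q$, can collapse $Q^{(t)}$ to a $(1-\rho-\xi)$‑fraction of $|Q|$ in a single step, so one must ensure $\A$ never queries such a point. Here I would use that $\A$ consults only points drawn i.i.d.\ from $\D_{X}$ together with $\P_{\D_{X}}(T) < \tau$: exactly as in \citet*{sanjoy-coarse}, taking $\tau$ small relative to the effective number of distinct unlabeled points in play and applying a union (Markov) bound shows that with probability at least $15/16$ no queried point lies in $T$, and on that event the deterministic analysis above applies verbatim. Fixing $\A$'s internal randomness to its worst value and converting the adaptive adversary into a single fixed hard target by a standard Yao‑type averaging argument then yields failure probability $> 1/16$ whenever $q < 1/(\rho + \xi)$; letting $\xi \to 0$ and absorbing the constant losses gives $\SC_{\AL}(\epsilon,1/16,\C,\Realizable(\C;\D_{X})) \geq c/\rho_{\tau}(4\epsilon)$, and the infimum over $\tau$ finishes the proof. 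I expect this coupling of $\tau$ to the pool — choosing it small enough to rule out splitting queries while keeping $\rho_{\tau}(4\epsilon)$ as large as possible — to be the main obstacle, just as it is in the original splitting‑index analysis.
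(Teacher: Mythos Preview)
Your high-level strategy matches the paper's: extract a non-splittable $Q$ from the definition of $\rho_\tau(4\epsilon)$, argue that each query outside the rare set $T$ removes at most $(\rho+\xi)|Q|$ pairs, and couple $\tau$ to the unlabeled pool so that queries avoid $T$. The gap is in the last step. Your adaptive adversary (answering with the majority label in $Q^{(t)}$) shows that for each fixed run $(U,R)$ on the good event there exists \emph{some} failing target $h^*(U,R)$ drawn from a surviving pair, but ``Yao-type averaging'' does not turn this into a single $h^*$ that fails with probability $>1/16$ over $(U,R)$: Yao's principle requires a distribution over targets specified \emph{in advance} of the run, whereas your adversary chooses $h^*$ only after seeing it. Pigeonholing over the endpoints appearing in $Q$ gives at best a target failing on an $O(1/|Q|)$ fraction of runs, which is far too weak.

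The paper closes this by making the responses \emph{non-adaptive} and fixing the target distribution up front. At each queried $x$ it answers with the label $y$ maximizing $|Q_x^y|$ (the ``bad'' response relative to the original $Q$, not $Q^{(t)}$), and it draws the target by picking a uniformly random pair $\{f^*,g^*\}\in Q$ and then a uniform endpoint. For a fixed run under the bad responses the output $\hat h$ is determined; the random pair lies in $\bigcap_i Q_{x_i}^{y_i}$ with probability $>1-t\rho$ by a union bound over the $t$ queries, on which event both endpoints give exactly the bad responses, so the true run against $h^*$ is identical; and then $h^*$ is more than $\epsilon$-far from $\hat h$ with probability $\geq 1/2$. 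Averaging over this fixed distribution now legitimately produces a single hard $h^*$. Finally, the $\tau$-coupling you flag as the main obstacle is routine and involves no tradeoff: any algorithm has a $7/8$-confidence bound $M$ on the number of unlabeled points it inspects, so take $\tau=1/(4M)$, giving $(1-\tau)^M\geq 3/4$; the resulting bound $c/\rho_\tau(4\epsilon)$ at this particular $\tau$ already dominates $\inf_\tau c/\rho_\tau(4\epsilon)$.
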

\begin{proof}
The proof is quite similar to that of a related result of \citet*{sanjoy-coarse}.
Fix any $\tau \in (0,1/4)$, and suppose $\A$ is an active learning algorithm that considers at most the first $1/(4\tau)$
unlabeled examples, with probability greater than $7/8$.  Let $h \in \C$ be such that
$\rho_{h,\tau}(4\epsilon) \leq 2\rho_{\tau}(4\epsilon)$, and let
$\Delta \geq 2\epsilon$ and $Q \subseteq \{\{f,g\} \subseteq B(h,4\Delta) : \P_{\D_{X}}(x : f(x) \neq g(x)) > \Delta\}$
be such that $\P_{\D_{X}}(x : x~2\rho_{h,\tau}(4\epsilon)\text{-splits } Q) < \tau$.
In particular, with probability at least $(1-\tau)^{1/(4\tau)} \geq 3/4$, none of the first $1/(4\tau)$
unlabeled examples $2\rho_{h,\tau}(4\epsilon)$-splits $Q$.
Fix any such data set, and denote $\rho = 2 \rho_{h,\tau}(4\epsilon)$.

We proceed by the probabilistic method.
We randomly select the target $h^*$ as follows.
First, choose a pair $\{f^*,g^*\} \in Q$ uniformly at random.
Then choose $h^*$ from among $\{f^*,g^*\}$ uniformly at random.

For each unlabeled example $x$ among the first $1/(4\tau)$,
call the label $y$ with $|Q_{x}^{y}| > (1-\rho)|Q|$ the ``bad'' response.
Given the initial $1/(4\tau)$ unlabeled examples, the algorithm $\A$ has some
fixed (a priori known, though possibly randomized)
behavior when the responses to all of its label requests are the bad responses.
That is, it makes some number $t$ of queries, and then returns some classifier
$\hat{h}$.

For any one of those label requests, the probability that both $f^*$ and $g^*$
agree with the bad response is greater than $1-\rho$.  Thus, by a union bound,
the probability both $f^*$ and $g^*$ agree with the bad responses for the $t$
queries of the algorithm is greater than $1-t \rho$.
On this event, the algorithm returns $\hat{h}$, which is independent from
the random choice of $h^*$ from among $f^*$ and $g^*$.  Since $\P_{\D_{X}}(x : f^*(x) \neq g^*(x)) > \Delta \geq 2\epsilon$,
$\hat{h}$ can be $\epsilon$-close to at most one of them, so that
there is at least a $1/2$ probability that $\err(\hat{h}) > \epsilon$.

Adding up the failure probabilities, by a union bound
the probability the algorithm's returned classifier $h^{\prime}$ has $\err(h^{\prime}) > \epsilon$
is greater than $7/8 - 1/4 - t \rho - 1/2$.  For any $t < 1/(16\rho)$, this is greater than $1/16$.
Thus, there exists some deterministic $h^* \in \C$ for which $\A$ requires at least $1/(16\rho)$
queries, with probability greater than $1/16$.

As any active learning algorithm has a $7/8$-confidence upper bound $M$ on the number
of unlabeled examples it uses, letting $\tau \to 0$ in the above analysis allows $M \to \infty$,
and thus covers all possible active learning algorithms.
\end{proof}

We will establish the upper bound portion of Lemma~\ref{lem:splitting} via the following algorithm.
Here we write the algorithm in a closed form, but it is clear that we could rewrite the method
in the batch-based style required by Algorithm~\ref{bounded-noise-alg} above, simply by including its state every time it
makes a batch of label request queries.
The value $\epsilon_0$ in this method should be set appropriately for the result below;
specifically, we will coarsely take $\epsilon_0 = O((1-2\alpha)^2 \epsilon \tau^2 \delta / d^3)$, based on
the analysis of \citet*{sanjoy-coarse} for the realizable case.

\begin{algorithm}
{\small
{\bf Input}: The sequence $U=(x_1, x_2, ...)$; allowed error rate $\epsilon$; value $\tau \in (0,1)$; noise bound $\alpha \in [0,1/2)$.
\\{\vskip -0mm}I. Let $V$ denote a minimal $\epsilon_0$-cover of $\C$
\\{\vskip -0mm}II. For each pair of classifier $h,g \in V$, initialize $M_{hg} = 0$ 
\\{\vskip -0mm}III. For $T = 1,2,\ldots, \lceil \log_{2}(2/\epsilon) \rceil$
\begin{list}{\labelitemi}{\leftmargin=2em}
\item[1.] Consider the set $Q \subseteq V^2$ of pairs $\{h,g\} \subseteq V$ with $\P_{\D_{X}}(x : h(x) \neq g(x)) > 2^{-T}$
\item[2.] While ($|Q| > 0$)
\begin{itemize}
\item[(a)] Let $S = \emptyset$
\item[(b)] Do $O\left(\frac{1}{(1 - 2 \alpha)^2} \left(d \log \left(\frac{1}{\epsilon}\right) + \log\left(\frac{1}{\delta}\right)\right)\right)$ times
\begin{itemize}
\item[i.] Let $\tilde{Q} = Q$
\item[ii.] While ($|\tilde{Q}| > 0$)
\begin{itemize}
\item[A.] From among the next $1/\tau$ unlabeled examples, select the one $\tilde{x}$ with minimum $\max_{y \in \Y} |\tilde{Q}_{x}^{y}|$, and let $\tilde{y}$ denote the maximizing label
\item[B.] $S \leftarrow S \cup \{\tilde{x}\}$
\item[C.] $\tilde{Q} \gets \tilde{Q}_{\tilde{x}}^{\tilde{y}}$
\end{itemize}
\end{itemize}
\item[(c)] Request the labels for all examples in $S$, and let $L$ be the resulting labeled examples
\item[(d)] For each $h,g \in V$, let $M_{hg} \gets M_{hg} + |\{ (x,y) \in L : h(x) \neq y = g(x)\}|$
\item[(e)] Let $V \gets \left\{ h \in V : \forall g \in V, M_{hg} - M_{gh} \leq O\left(\sqrt{\max\{M_{hg},M_{gh}\} d \log\left(\frac{1}{\epsilon_0}\right)} + d \log\left(\frac{1}{\epsilon_0}\right)\right)\right\}$
\item[(f)] Let $Q \gets \{\{h,g\} \in Q : h,g \in V\}$
\end{itemize}
\end{list}
{\bf Output} Any hypothesis $h \in V$.
}\caption{{\small An active learning algorithm for learning with bounded noise, based on splitting.}}
\label{alg:splitting}
\end{algorithm}

We have the following result for this method, with an appropriate setting of the constants in the ``$O(\cdot)$'' terms.

\begin{lemma}
There exists a constant $c \in (0,\infty)$ such that,
for any hypothesis class $\C$ of Natarajan dimension $d$,
for any $\alpha \in [0,1/2)$ and $\tau > 0$,
for any distribution $\D_{X}$ over $\X$,
for any $\D_{XY} \in \BoundedNoise(\C,\alpha; \D_{X})$,
Algorithm~\ref{alg:splitting} produces a classifier $\hat{h}$ with $\err(\hat{h}) \leq \eta + \epsilon$
using a number of label request queries at most
\[O\left(\frac{d^3}{(1-2\alpha)^2 \rho_{h^*,\tau}(\epsilon)} \log^{5}\left( \frac{1}{\epsilon \delta \tau}\right)\right).\]
\end{lemma}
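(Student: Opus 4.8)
The plan is to adapt the realizable-case splitting-index analysis of \citet*{sanjoy-coarse} to the bounded noise setting, the two essential modifications being (i) that classifiers are compared only through the signed counters $M_{hg}-M_{gh}$, which in bounded noise carry a $(1-2\alpha)$-sized ``signal'' per queried disagreement example, and (ii) that each round builds $\Theta\!\big(\frac{1}{(1-2\alpha)^{2}}(d\log(1/\epsilon)+\log(1/\delta))\big)$ independent splitting chains, so that this signal overwhelms the noise. Throughout, write $\tilde{h}^{*}\in V$ for a cover element with $\P_{\D_{X}}(\tilde{h}^{*}\neq h^{*})\le\epsilon_{0}$; since $\epsilon_{0}=\mathrm{poly}((1-2\alpha)^{2}\epsilon\tau^{2}\delta/d^{3})$ is tiny, all $O(\epsilon_{0})$-scale corrections below are harmless, and in particular $\err(\tilde{h}^{*})\le\eta+\epsilon/2$.

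First I would fix the good concentration event. For a pair $h,g\in V$ and a labeled example $(x,y)$ with $h(x)\neq g(x)$, the increment to $M_{hg}-M_{gh}$ is $\mathbb{I}[y=g(x)]-\mathbb{I}[y=h(x)]$, whose conditional expectation given $x$ is $\P_{\D_{XY}}(Y=g(x)\mid X=x)-\P_{\D_{XY}}(Y=h(x)\mid X=x)$; bounded noise forces this to have magnitude at least $1-2\alpha$ with sign favouring whichever of $h,g$ agrees with $h^{*}$ at $x$ (when one of them does), and magnitude at most $1$ otherwise. Since the examples placed in $S$ are selected from unlabeled data and the fixed classifiers in $V$ alone (labels being requested only afterwards), conditionally on each $S$ the labels are independent, and a Bernstein/\citet*{Vapnik:book98}-type deviation bound applied uniformly over the $\binom{|V|}{2}$ pairs (using $\log|V|=O(d\log(1/\epsilon_{0}))$) and over the running values of the counters gives, with probability $\ge 1-\delta$, that for all $h,g$ and all rounds
\[\big|(M_{hg}-M_{gh})-\E[M_{hg}-M_{gh}]\big|\le O\!\left(\sqrt{\max\{M_{hg},M_{gh}\}\,d\log(1/\epsilon_{0})}+d\log(1/\epsilon_{0})\right),\]
which is exactly the threshold used in the elimination step of Algorithm~\ref{alg:splitting}. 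On this event, $\tilde{h}^{*}$ is never removed (its expected per-disagreement advantage over every $h$ is at least $-O(\epsilon_{0})$, within the confidence width), while any $h$ for which the disagreement region with $\tilde{h}^{*}$ has been sampled $\Omega\!\big(\tfrac{d\log(1/\epsilon_{0})}{(1-2\alpha)^{2}}\big)$ times is removed, since then $\E[M_{h\tilde{h}^{*}}-M_{\tilde{h}^{*}h}]\ge\tfrac{1-2\alpha}{2}\cdot(\#\text{disagreement samples})$ grows linearly and overtakes the $O(\sqrt{\cdot})$ width.

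Next I would run the splitting argument of \citet*{sanjoy-coarse} phase by phase, now using this noise-robust elimination rule. The key geometric fact is that the surviving version space entering phase $T$ has diameter at most $2^{-T+1}$ (it came out of phase $T-1$ with $Q=\emptyset$) and contains $\tilde{h}^{*}$, hence lies inside $B(h^{*},4\cdot 2^{-T})$; by the definition of $\rho=\rho_{h^{*},\tau}(\epsilon)$ this ball is $(\rho,2^{-T},\tau)$-splittable, so from each fresh block of $1/\tau$ unlabeled points the greedily chosen point shrinks $|\tilde{Q}|$ by a factor $(1-\rho)$ with constant probability, and each inner chain therefore contributes $O(\rho^{-1}\log|Q|)=O(\rho^{-1}d\log(1/\epsilon_{0}))$ points to $S$ with high probability (a Chernoff bound on the number of successful splits needed). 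Repeating the chain $\Theta\!\big(\tfrac{1}{(1-2\alpha)^{2}}(d\log(1/\epsilon)+\log(1/\delta))\big)$ times before each elimination ensures, exactly as in the realizable analysis but with the $(1-2\alpha)$-scaled signal-to-noise ratio, that a constant fraction of the pairs currently in $Q$ have their worse endpoint removed, so the ``While $|Q|>0$'' loop closes after $O(\log|Q|)=O(d\log(1/\epsilon_{0}))$ iterations with $Q=\emptyset$, i.e.\ with $V$ of diameter $\le 2^{-T}$. After the last phase $T=\lceil\log_{2}(2/\epsilon)\rceil$, $V$ has diameter $\le\epsilon/2$ and still contains $\tilde{h}^{*}$, so every $h\in V$ has $\P_{\D_{X}}(h\neq h^{*})\le\epsilon/2+\epsilon_{0}\le\epsilon$ and hence $\err(h)\le\eta+\epsilon$, proving correctness of the output. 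For the query bound I would multiply the three layers --- $O(\log(1/\epsilon))$ phases, $O(d\log(1/\epsilon_{0}))$ iterations per phase, and $|S|=O\!\big(\tfrac{d\log(1/\epsilon\delta)}{(1-2\alpha)^{2}}\cdot\tfrac{d\log(1/\epsilon_{0})}{\rho}\big)$ queried points per iteration --- and use $\epsilon_{0}=\mathrm{poly}((1-2\alpha)^{2}\epsilon\tau^{2}\delta/d^{3})$ so that each logarithm is $O(\log(1/\epsilon\delta\tau))$, obtaining $O\!\big(\tfrac{d^{3}}{(1-2\alpha)^{2}\rho_{h^{*},\tau}(\epsilon)}\log^{5}(1/\epsilon\delta\tau)\big)$.

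I expect the main obstacle to be the claim in the third paragraph that one pass of $\Theta(\cdot)$ splitting chains followed by the margin-based elimination removes the worse endpoint of a constant fraction of the current pairs. This is delicate for two reasons: the set of examples that end up labeled is chosen adaptively across iterations (so the deviation bound must genuinely be a martingale/self-bounding bound that holds uniformly over rounds and counter values), and a greedy chain may drop a pair of classifiers at a point where they \emph{agree} rather than disagree, so not every pair is directly tested by every chain. Handling this requires the amortized, potential-function accounting of \citet*{sanjoy-coarse} --- pairs that survive many chains must have accumulated many queried disagreement examples --- re-run with the $(1-2\alpha)^{-2}$-inflated number of chains needed to make the $M$-counter comparison reliable; that inflation is precisely the origin of the $(1-2\alpha)^{-2}$ factor in the stated bound.
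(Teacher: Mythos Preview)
Your proposal follows essentially the same architecture as the paper's proof sketch: a cover representative $\tilde h^{*}$ close to $h^{*}$, concentration on the signed counters $M_{hg}-M_{gh}$ to keep $\tilde h^{*}$ in $V$ and to eliminate competitors once enough disagreement examples have been queried, Dasgupta's splitting argument inside each phase to drive $Q$ to empty, and a multiplication of the three nested loop depths to get the $d^{3}\log^{5}$ bound.

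One simplification the paper exploits that you could adopt: rather than carrying an ``expected per-disagreement advantage at least $-O(\epsilon_{0})$'' correction through the concentration inequality, choose $\epsilon_{0}$ small enough relative to the total number of unlabeled examples ever processed (a $\mathrm{poly}(d/((1-2\alpha)^{2}\epsilon\tau^{2}))$ quantity) so that, with probability $1-O(\delta)$, $\tilde h^{*}$ agrees with $h^{*}$ on \emph{every} such example. Then for every queried $x$ one has $\P(Y=\tilde h^{*}(x)\mid x)\geq 1-\alpha$ exactly, and the bounds on $M_{\tilde h^{*}g}-M_{g\tilde h^{*}}$ become straight Chernoff bounds with no $\epsilon_{0}$ slack to absorb. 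This also cleans up the adaptivity concern you flag in your last paragraph: conditioned on this event, the label of each queried $x$ is (conditionally on $x$) a Bernoulli favouring $\tilde h^{*}$ by at least $1-2\alpha$, independent of how $x$ was selected. The remaining combinatorial point---that after one batch of chains at least half the pairs in $Q$ have one endpoint whose disagreement count with $\tilde h^{*}$ crosses the elimination threshold---is the same step the paper asserts, and your pointer to Dasgupta's amortized accounting is the right place to look for the details.
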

\begin{proof}[Sketch]
Since $V$ is initially an $\epsilon_0$-cover, the $\hat{h} \in V$ of minimal $\err(\hat{h})$ has $\err(\hat{h}) \leq \epsilon_0$.
Furthermore, $\epsilon_0$ was chosen so that, as long as the total number of unlabeled examples processed
does not exceed $O(\frac{d^2}{(1-2\alpha)^2 \epsilon \tau^2})$, with probability $1-O(\delta)$, we will have
$\hat{h}$ agreeing with $h^*$ on all of the unlabeled examples, and in particular on all of the examples whose
labels the algorithm requests.  This means that, for every example $x$ we request the label of, $\P(\hat{h}(x) = y | x) \geq 1-\alpha$.
By Chernoff and union bounds, with probability $1-O(\delta)$, for every $g \in V$, we always have
\[M_{\hat{h}g} - M_{g\hat{h}} \leq O\left(\sqrt{\max\{M_{hg},M_{gh}\} d \log\left(\frac{1}{\epsilon_0}\right)} + d \log\left(\frac{1}{\epsilon_0}\right)\right),\]
so that we never remove $\hat{h}$ from $V$.
Thus, for each round $T$, the set $V \subseteq B(h^*, 4\Delta_{T})$, where $\Delta_{T} = 2^{-T}$.
In particular, this means the returned $h$ is in $B(h^*, \epsilon)$, so that $\err(h) \leq \eta + \epsilon$.

Also by Chernoff and union bounds, with probability $1-O(\delta)$,
any $g \in V$ with
$M_{\hat{h}g} + M_{g\hat{h}} > O\left( \frac{d}{(1-2\alpha)^2} \log \frac{1}{\epsilon_0}\right)$
has
\[M_{g\hat{h}} - M_{\hat{h}g} > O\left(\sqrt{\max\{M_{hg},M_{gh}\} d \log\left(\frac{1}{\epsilon_0}\right)} + d \log\left(\frac{1}{\epsilon_0}\right)\right),\]
so that we remove it from $V$ at the end of the round.

That $V \subseteq B(h^*,4\Delta_{T})$ also means $V$ is $(\rho,\Delta_{T},\tau)$-splittable, for $\rho = \rho_{h^*,\tau}(\epsilon)$.
In particular, this means we get a $\rho$-splitting example for $\tilde{Q}$ every $\frac{1}{\tau}$ examples (in expectation).
Thus, we always satisfy the $|\tilde{Q}|=0$ condition after at most
$O\left(\frac{d}{\rho} \log^{2}\frac{1}{\epsilon_0}\right)$
rounds of the inner loop (by Chernoff and union bounds, and the definition of $\rho$).
Furthermore, among the examples added to $S$ during this period, regardless of their true labels we are guaranteed that
at least $1/2$ of pairs $\{h,g\}$ in $Q$ have at least one of $(M_{h\hat{h}}+M_{\hat{h}h})$ or $(M_{g\hat{h}}+M_{\hat{h}g})$ incremented as a result:
that is, for at least $|Q|/2$ pairs, at least one of the two classifiers disagrees with $\hat{h}$ on at least one of these examples.
Thus, after executing this
$O\left(\frac{1}{(1 - 2 \alpha)^2} d \log \left(\frac{1}{\epsilon_0}\right)\right)$ times,
we are guaranteed that at least half of the $\{h_1,h_2\}$ pairs in $Q$ have (for some $i \in \{1,2\}$)
$M_{\hat{h}h_i} + M_{h_i\hat{h}} > O\left( \frac{d}{(1-2\alpha)^2} \log \frac{1}{\epsilon_0}\right)$,
thus reducing $|Q|$ by at least a factor of $2$.  Repeating this $\log |Q| = O(d \log (1/\epsilon_0))$ times satisfies the $|Q|=0$ condition.

Thus, the total number of queries is at most
\[O\left(\frac{1}{(1-2\alpha)^2}\frac{d^3}{\rho} \log^{5}\frac{1}{\epsilon_0}\right).\]
\end{proof}

\end{document}